\DeclareMathOperator*{\argmax}{arg\,max}
\DeclarePairedDelimiter\norm{\lVert}{\rVert}
\newtheorem{theorem}{Theorem}
\newtheorem{lemma}{Lemma}
\newtheorem{definition}{Definition}
\newtheorem{claim}[lemma]{Claim}
\newcommand{\Argmax}[1]{\underset{#1}{\operatorname{arg}\,\operatorname{max}}\;}
\newcommand{\arm}{\mathcal{A}}
\newcommand{\Alg}{\textsc{Alg}}
\newcommand{\Hs}{\textsc{Hs}}
\newcommand{\AlgRS}{\textsc{Alg-RS}}
\newcommand{\AlgDRS}{\textsc{Alg-d-RS}}
\newcommand{\AlgDSK}{\textsc{d-LSky}}
\newcommand{\AlgDRSK}{\textsc{d-RSky}}
\newcommand{\vare}{\varepsilon}
\newcommand{\muhat}{\widehat{\mu}}
\newcommand{\game}{\textsc{Best}_{m,\vare,T}(\textsc{Alg})}
\newcommand{\gameone}{\textsc{Best}_{m,\vare,T_1}(\textsc{Alg})}
\title{\bfseries Optimal Algorithms for Range Searching over \\ Multi-Armed Bandits}
\author{Siddharth Barman\thanks{Indian Institute of Science. {\tt barman@iisc.ac.in}} \quad Ramakrishnan Krishnamurthy\thanks{Indian Institute of Science. {\tt kramakrishna@iisc.ac.in}} \quad Saladi Rahul\thanks{Indian Institute of Science. {\tt saladi@iisc.ac.in}}}
\date{}
\begin{document}

\maketitle 

\begin{abstract}
This paper studies a multi-armed bandit (MAB) version of the range-searching problem. In its basic form, range searching considers as input a set of points (on the real line) and a collection of (real) intervals. Here, with each specified point, we have an associated weight, and the problem objective is to find a maximum-weight point within every given interval. 

The current work addresses range searching with stochastic weights: each point corresponds to an arm (that admits sample access) and the point's weight is the (unknown) mean of the underlying distribution. In this MAB setup, we develop sample-efficient algorithms that find, with high probability, near-optimal arms within the given intervals, i.e., we obtain  PAC (probably approximately correct) guarantees. We also provide an algorithm for a generalization wherein the weight of each point is a multi-dimensional vector. The sample complexities of our algorithms depend, in particular, on the size of the {optimal hitting set} of the given intervals. 

Finally, we establish lower bounds proving that the obtained sample complexities are essentially tight. Our results highlight the significance of geometric constructs---specifically, hitting sets---in our MAB setting. 
\end{abstract}

\section{Introduction}
\label{section:introduction}
Range searching is a fundamental problem in computational geometry and database theory; see, e.g., \cite{ae99,a17,gjrs18,rt19}. This problem has been extensively studied over the past few decades, and applications of range searching (along with its variants) arise in numerous real-world domains, such as spatial databases~\cite{s90}, temporal 
databases~\cite{aae03}, and networking~\cite{lkgh03}.

In its basic form, range searching considers as input a set of points in $\mathbb{R}$---each with an associated weight---and a collection of (real) intervals.\footnote{The given intervals are not necessarily disjoint.} The problem's objective is to efficiently find, for every given interval $I$, an input point within $I$ with maximum weight. 

Note that this classic formulation of range searching addresses fixed (deterministic) weights, i.e., one assumes that the weight of each point is known a priori. By contrast, many modern applications require queries to be processed on uncertain (stochastic) data. As a stylized example of such a setting, consider a paid crowdsourcing platform wherein the competency of the participants (say, in performing a type of task) is not known beforehand. Here, each participant $a$ can be represented as a point $p_a \in \mathbb{R}$---which denotes the fixed monetary payment for $a$---and the competency (weight) of $a$ can be modeled as a stochastic quantity. In this setup, range searching corresponds to the natural problem of finding, for each queried price range (i.e., within each given interval), the most proficient crowdworker.

Motivated, in part, by such applications, we study range searching in the multi-armed bandit (MAB) framework. In particular, we address a natural formulation wherein each input point $p_a \in \mathbb{R}$ is associated with a random variable (drawn from an unknown distribution) and the weight of the point corresponds to the (a priori unknown) mean, $\mu_a$, of the random variable. That is, each given point is an arm and the objective is to identify an optimal (with respect to $\mu_a$) arm within each given interval. Here, as is standard in the MAB literature, we assume sample access to the random variables associated with the arms/points. The current work develops novel, {sample-efficient} algorithms for this stochastic version of range searching.

We also design algorithms for a pertinent generalization wherein the weight of each point is a $d$-dimensional vector, i.e., the random variables associated with every arm/point and, hence, the means $\mu_a$s are $d$-dimensional vectors. For the crowdsourcing example mentioned above, this generalization models settings in which the proficiency of each crowdworker needs to be assessed across $d$ different types of tasks.\footnote{As before, the monetary payment for each crowdworker is a fixed and known number.} In this multi-dimensional setting, the problem objective extends, from finding an optimal arm per interval, to computing a set of \emph{Pareto optimal} arms within each input interval. 

\paragraph{Our Contributions and Techniques.} Given the MAB nature of the problems, we obtain PAC-style (probably approximately correct) results, i.e., the developed algorithms compute, with high probability, near-optimal arms for every given interval.

The sample complexities of our algorithms depend, in particular, on the size of the optimal \emph{hitting set} of the given collection of intervals. Recall that a (finite) set of points, $\textsc{Hs} \subset \mathbb{R}$,\footnote{The hitting set is not required to be a subset of the input points.} is said to be a \emph{hitting set} for a collection of intervals $\mathcal{I}=\{I_1,I_2, \ldots, I_q\}$ iff $\textsc{Hs}$ contains a point from the interior of each interval $I_i \in \mathcal{I}$. Let $\tau$ be the size of a minimum-cardinality hitting set for the queried collection of intervals $\mathcal{I}$ and $n$ be the number of input points (arms). We show that\footnote{Note that, even though the single-dimensional-weights setting ($d=1$) is a special case of the multi-dimensional one, we present the settings separately to highlight the layered development of the algorithmic ideas.} 

\begin{itemize}
\item[(i)] For range searching with single-dimensional weights ($\mu_a \in \mathbb{R}$), i.e., for finding (near) optimal arms for every interval, the number of samples required by our algorithm is $O\left(\frac{n}{\varepsilon^2} \log \left( \frac{\tau}{\varepsilon \delta} \right) \right)$ (Theorem \ref{theorem:range-max}). Here, $\varepsilon>0$ is the approximation parameter and $\delta>0$ is the confidence parameter in the PAC guarantee.
\item[(ii)] For range searching with multi-dimensional weights ($\mu_a \in \mathbb{R}^d$), i.e., for finding (near) Pareto optimal arms for every interval, the number of samples required by our algorithm is $O\left(\tfrac{nd}{\varepsilon^2} \log(\tfrac{\tau d}{\varepsilon \delta}) \right)$ (Theorem \ref{theorem:range-pareto-optimal}). 
\item[(iii)] Finally, we establish lower bounds proving that the sample upper bounds obtained in our algorithms are essentially tight (Theorem \ref{thm:final-lb}). Specifically, these results show that a sample complexity dependence on $\log \tau$ is unavoidable, in general. 
\end{itemize}

Notably the above-mentioned upper bounds do not explicitly depend on the number of input intervals, $q$. Indeed, the size of an optimal hitting set, $\tau$, can be significantly smaller than $q$, e.g., $\tau = 1$ for a collection of intervals with a common intersection. Even in the worst case (with pairwise-disjoint intervals), we have $\tau = q$; one can always select the midpoint of each given interval to obtain a hitting set. Notably, a guarantee in terms of $\tau$ provides refined (and matching) upper and lower bounds. It also highlights a novel application of this geometric parameter in the current MAB setting. 

At a high level, our algorithm for single-dimensional weights proceeds by using the $\tau$ points in the optimal hitting set to construct $\tau+1$ pairwise-disjoint intervals (referred to as \emph{slabs}) that cover all the input intervals (which themselves might have multiple intersections).  A key idea then is to identify---through relevant subroutines---a candidate set of arms within each of the $\tau+1$ constructed slabs and judiciously combine them to find the desired set of near-optimal arms for the input intervals. The subroutines employed to populate the candidate set of arms are based on variations of a result of Cheu et al.~\cite{cheu2018skyline} that addresses the \emph{skyline problem} (with scalar weights). 

We build upon this design template, with additional technical insights, to obtain an algorithm for $d$-dimensional weights. Towards a subroutine, we develop a PAC algorithm for the $d$-dimensional version of the {skyline problem} (see Subsection~\ref{subsection:skyline-algos} for details). Note that the algorithm provided in \cite{cheu2018skyline} solely address the single-dimensional version of the skyline problem. Furthermore, even the scalar instantiation (obtained by setting $d=1$) of our skyline algorithm is distinct from that of Cheu et al.~\cite{cheu2018skyline}.

\paragraph{Related Work.} 
Given that the current paper is focused on finding (near) optimal arms in a sample-efficient manner, we focus on prior work with similar objectives. 
For a broad survey of MAB literature see, e.g., \cite{slivkins2019introduction}.   

Even-Dar et al.~\cite{even2006action} develop a PAC algorithm for the problem of identifying a best arm among all the $n$ given ones;
see also \cite{domingo2002adaptive}.
The  algorithm of Even-Dar et al. has a sample complexity of $O\left(\frac{n}{\vare^2} \log \frac{1}{\delta} \right)$ 
and a matching lower bound was obtained in~\cite{mannor2004sample}.
Multiple variants of best arm identification have also been studied in the literature; see, e.g.,~\cite{AudibertBR,bubeck2013multiple,kalyanakrishnan2010efficient,jamieson2014lil,garivier2016optimal,jun2016top,russo2016simple,yu2018pure,grover2018best}.
Note that best arm identification can be viewed as a special case of bandit range searching (by considering a single input interval that contains all the arms). 

Addressing settings in which each arm is associated with a $d$-dimensional random variable, the work of Auer et al.~\cite{auer2016pareto} provides a PAC algorithm---with sample complexity $O\left(\frac{n}{\vare^2} \log \frac{nd}{\delta} \right)$---for finding Pareto optimal arms. This setting is generalized in our range-searching framework with $d$-dimensional weights.  

Cheu et al.~\cite{cheu2018skyline} study a setup wherein the arms are positioned on the real line and they provide a PAC algorithm to find a \emph{skyline}. Bandit range searching generalizes this skyline problem as well. Complementarily, in the case of single-dimensional weights, one can use the algorithm of Cheu et al.~\cite{cheu2018skyline} as a subroutine for range searching. For multi-dimensional weights, we in fact develop a novel algorithm for computing skylines. This result generalizes the work of Cheu et al.~\cite{cheu2018skyline} (to multi-dimensional weights) and is potentially interesting in its own right. The technical connections between range searching and skylines are detailed in subsequent sections. 

Multiple recent results in computational geometry have addressed the (classic) range searching problem over uncertain data~\cite{txc07,acty09,akss18,lw16,cgzjcz17}. These works primarily address uncertainty in the point locations. By contrast, the current setting studies fixed point locations, but stochastic weights. \\

\noindent
\emph{Naive Algorithms:} A direct approach for the problem at hand is to first compute the empirical estimates of the weights (means) by sampling each arm $O\left( \frac{1}{\varepsilon^2} \log \frac{n}{\delta} \right)$ times and then report the optimal arms with respect to the estimates. This method requires $O\left( \frac{n}{\varepsilon^2} \log \frac{n}{\delta} \right)$ samples overall; see, e.g., the \textsc{Naive} algorithm in \cite{even2006action} for a PAC analysis. One can also show (by considering a union bound over the intervals, instead of the arms) that $O\left( \frac{n}{\varepsilon^2} \log \frac{q}{\delta} \right)$ samples suffice; here $q$ is the number of input intervals. Notably, the current work goes beyond these generic approaches and invokes problem-specific insights (e.g., the use of hitting sets) to develop refined upper bounds. Our tight lower bounds further substantiate the relevance of the obtained refinement. 

In the multi-dimensional weights case, the naive method would require $O\left( \frac{n}{\varepsilon^2} \log \frac{nd}{\delta} \right)$ samples. Comparing this sampling bound with Theorem \ref{theorem:range-pareto-optimal}, we can quantitatively identify settings wherein improvements over naive sampling are obtained. In particular, moderate values of $d$ highlight the relevant use cases of our result; notably, applications wherein $d \ll n$ are well-studied in computational geometry.

\section{Notation and Preliminaries}
\label{section:notation}
The paper studies settings in which one has access to independent draws from $n$ unknown distributions, each supported on $[0,1]$, i.e., we have sample access to $n$ arms. Throughout, $\mu_a \in [0,1]$ will be used to denote the (unknown) mean under the $a$th distribution and we will refer to $\mu_a$ as the \emph{weight} of arm $a \in [n]\coloneqq \{1,2, \ldots, n\}$. 

Furthermore, in our framework, each arm $a \in [n]$ is associated with a (fixed) point $p_a \in \mathbb{R}$. The set of points $\mathcal{P} \coloneqq \left\{p_1, \ldots, p_n\right\}$ is given to us as input, along with a collection of $q$ real intervals $\mathcal{I} \coloneqq \left\{ I_i = [\ell_i, r_i] \right\}_{i=1}^q$. Without loss of generality, we will assume that each given interval has a non-empty interior, i.e., for each interval $I_i = [\ell_i, r_i]$ we have $\ell_i < r_i$.
For any interval $J=[\ell, r]$, write $\arm(J)$ to denote the set of arms belonging to $J$, 
\begin{align*}
\arm(J) \coloneqq \{ a \in [n] \mid \ell  \leq p_a \leq r \}.
\end{align*}

The current work studies the \emph{bandit range searching} problem: given a set of points $\mathcal{P}=\{p_a\}_{a=1}^n$, a collection of intervals $\mathcal{I} = \{ I_i\}_{i=1}^q$, and sample-access to $n$ arms, find---for each given interval $I_i$---an arm $a \in \arm(I_i)$ with maximum weight $\mu_a$. 

Given the multi-armed bandit nature of this problem, we obtain PAC (probably approximately correct) guarantees; in particular, we develop sample-efficient algorithms that find, with high-probability, near-optimal arms for each given interval. We next define the PAC constructs, with approximation parameter $\varepsilon >0$ and confidence parameter $\delta >0$.

\begin{definition}[$\varepsilon$-optimality] 
For any interval $I$ and parameter $\varepsilon >0$, an arm $a \in \arm(I)$ is said to be $\varepsilon$-optimal for $I$ iff $\mu_a \geq \mu_b - \varepsilon$ for all $b \in \arm(I)$. 
\end{definition}

\begin{definition}[$(\varepsilon, \delta)$-PAC guarantee] An algorithm $\Alg$ is said to achieve the $(\varepsilon, \delta)$-PAC guarantee for the bandit range searching problem, iff---given any problem instance $(\mathcal{P}, \mathcal{I})$---$\Alg$ finds, with probability at least $(1- \delta)$, an $\varepsilon$-optimal arm for every interval $I \in \mathcal{I}$. 
\end{definition}

\paragraph{Range Searching with Multi-Dimensional Weights.} We will also develop algorithms for a generalization of bandit range searching wherein the weights are multi-dimensional vectors. In this generalization, for every arm, the underlying random variable is $d$-dimensional and component-wise supported on $[0,1]$. That is, for each arm $a \in [n]$ the (unknown) weight $\mu_a \in [0,1]^d$. The range aspects of the problem remain unchanged: each arm $a$ is associated with a (given) point $p_a \in \mathbb{R}$ and, as before, we are given a collection of intervals $\mathcal{I}=\left\{I_i = [\ell_i, r_i] \right\}_{i=1}^q$. In this setup, since the weights are $d$-dimensional vectors (instead of scalars), the problem objective extends to finding $\varepsilon$-Pareto optimal arms (instead of $\varepsilon$-optimal arms). 

Note that in the single-dimensional case, if $a$ was an $\varepsilon$-optimal arm in set $\arm$, then (hypothetically) adding $\varepsilon$ to the weight $\mu_a$, and keeping the weight of all the other arms unchanged, ensured proper optimality of $a$. The notion of $\varepsilon$-Pareto optimality (defined next) essentially builds upon this perspective. It deems a set of arms $T$ to be $\varepsilon$-Pareto optimal if (hypothetically) adding $\varepsilon \mathbbm{1}$ to the $d$-dimensional weight of each arm in $T$ (and keeping the weight of the other arms unchanged) gives us a legitimately Pareto optimal set. Formally, 

\begin{definition}[$\varepsilon$-Pareto optimality] 
\label{definition:eps-Pareto-optimal}
For any interval $I$ and parameter $\varepsilon >0$, a subset $T \subseteq \arm(I)$ is said to be $\varepsilon$-Pareto optimal for interval $I$ iff \\
\noindent
(a) for all arms $b \in \arm(I)$ there exists an arm $a \in T$ such that $\mu_a \geq \mu_b - \varepsilon \mathbbm{1}$, and \\
\noindent
(b) for all arms $a \in T$ there does not exist an arm $b \in \arm(I)$ such that $\mu_b \geq \mu_a + \varepsilon \mathbbm{1}$. 
\end{definition}
Here, the inequalities between weights are enforced component-wise and $\mathbbm{1}$ denotes the all-ones vector.  
We note that, up to pruning, a set $T$ that satisfies Definition~\ref{definition:eps-Pareto-optimal} also upholds the approximate-optimality criterion (specifically, Success Condition 2) studied in~\cite{auer2016pareto}.

Analogous to the single-dimensional case, we define the PAC guarantee under $d$-dimensional weights: an algorithm $\Alg$ is said to be $(\varepsilon, \delta)$-correct for bandit range searching with multi-dimensional weights, iff---given any problem instance $(\mathcal{P}, \mathcal{I})$---$\Alg$ finds, with probability at least $(1- \delta)$, an $\varepsilon$-Pareto optimal set of arms for every interval $I \in \mathcal{I}$. 

\paragraph{Hitting Set and Slabs.} 
A finite set of points $\Hs \subset \mathbb{R}$ is said to be a hitting set for a collection of intervals $\mathcal{I}=\left\{I_i = [\ell_i, r_i] \right\}_{i=1}^q$ iff for each $I_i = [\ell_i, r_i] \in \mathcal{I}$ there exists a point $e \in \Hs$ such that $\ell_i < e < r_i$.\footnote{Recall that for each input interval we have $\ell_i < r_i$.} 

Let $\Hs^*$ denote a minimum-cardinality hitting set for the given collection of intervals $\mathcal{I}$ and write $\tau \coloneqq |\Hs^*|$.\footnote{Given a collection of intervals $\mathcal{I}$, an optimal hitting set $\Hs^*$ (and, hence, $\tau$) can be computed in polynomial-time, via a greedy algorithm.} Note that the size of an optimal hitting set, $\tau$, is at most the number of intervals in $\mathcal{I}$, i.e., $\tau \leq q$. 

Our algorithms use the $\tau$ points in an optimal hitting set $\Hs^*$ and construct $\tau+1$ intervals that intersect only at their endpoints and partition the real line. In particular, with $e_1 < e_2 < \ldots < e_\tau$ denoting the set of points in $\Hs^*$, define the following collection of $\tau+1$ intervals: $S_0 \coloneqq (-\infty, e_1]$, $S_{\tau} \coloneqq [e_\tau, \infty)$, and $S_j \coloneqq [e_{j}, e_{j+1}]$ for $1 \leq j < \tau$.

We will refer to these $\tau+1$ intervals as \emph{slabs}. Note that, by construction, the slabs are indexed in increasing order of their left (or, equivalently, right) endpoints. Furthermore, the slabs have pairwise-disjoint interiors and their union covers all the input intervals, $\cup_{I \in \mathcal{I}} I \subseteq \cup_{j=0}^\tau S_j$. We will also utilize the following property of slabs, which follows from the fact that their endpoints constitute a hitting set: for every input interval $I \in \mathcal{I}$ there exists a sequence of \emph{at least two} slabs, $S_x, S_{x+1}, \ldots, S_y$, that intersect with $I$, i.e., 
\begin{align*}
I \cap S_t \neq \emptyset \qquad  \text{for $x \leq t \leq y$} \tag{P}
\end{align*}
Property (P) implies that every input interval gets partitioned among two or more slabs.

\section{Bandit Range Searching with Single-Dimensional Weights}
\label{section:single-dimension-algos}

Our algorithm, $\AlgRS$ (Algorithm \ref{algorithm:BRS}), begins by constructing slabs $\{S_j\}_j$ from an optimal hitting set of the input collection intervals $\mathcal{I}$. Then, $\AlgRS$ executes the following three subroutines for each slab $S_j$: \textsc{Best}($\cdot$), \textsc{LSkyline}($\cdot$), and \textsc{RSkyline}($\cdot$). The subset of arms collected through these executions serve as a candidate set $C$ and the algorithm computes the final set of $\varepsilon$-optimal arms by selecting the best (with respect to the empirically estimated weight) candidate arm within each input interval $I_i$. 

Given any interval $J$, approximation parameter $\varepsilon'>0$, and confidence parameter $\delta'>0$, the subroutine $\textsc{Best}(J, \varepsilon', \delta')$ finds, with probability $(1-\delta')$, an $\varepsilon'$-optimal arm for interval $J$ (i.e., the subroutine finds an $\varepsilon'$-optimal arm from the set of arms $\arm(J)$). $\AlgRS$ executes this subroutine over every slab $S_j=[e_j, e_{j+1}]$. We note that the algorithm of Even-Dar et al.~\cite{even2006action} provides a sample-efficient implementation of this subroutine.

The subroutines \textsc{LSkyline}($\cdot$) and \textsc{RSkyline}($\cdot$) solve the \emph{skyline problem}, which in fact can be viewed as a special case of bandit range searching. Specifically, in the skyline problem the points associated with the arms, $\{ p_a \in \mathbb{R}\}_a$, themselves generate the collection of query intervals $\mathcal{I}$: in the left-skyline problems we have $\mathcal{I} = \{ [p_a, p_{\infty}] \}_{a}$ and in the right-skyline problems $\mathcal{I} = \{ [p_{-\infty}, p_a]\}_{a}$, where $p_{\infty} \coloneqq \max_{a} \  p_a$ and $p_{-\infty} \coloneqq \min_{a} \ p_a$. The following definition encapsulates the $\varepsilon$-optimality requirement specifically for the skyline setting. 

\begin{definition}[$\varepsilon$-skyline]
\label{definition:skyline}
For an interval $J=[\ell,r]$ and parameter $\varepsilon >0$, a subset $L \subseteq \arm(J)$ is said to be an  $\varepsilon$-left-skyline within interval $J$ iff
(i) for each arm $b \in \arm(J)$, the set $L$ contains an $\varepsilon$-optimal arm for the interval $[p_b, r]$, and 
(ii) each arm in $L$ is $\varepsilon$-optimal within some interval $[p_b, r]$, with $b \in \arm(J)$. 
\end{definition}
Analogously, an $\varepsilon$-right-skyline, $R$, is  defined considering 
intervals $\{ [\ell, p_b]\}_{b \in \arm(J)}$, with $J=[\ell, r]$.\footnote{In Definition \ref{definition:skyline}, in place of the intervals $\{[p_b, r]\}_{b \in \arm(J)}$ one can equivalently use $\{[p_b, p_\infty]\}_{b \in \arm(J)}$, where $p_\infty = \max_{x \in \arm(J)} p_x$.}
Note that the intervals considered in Definition~\ref{definition:skyline} intersect and, hence, a single arm can be $\varepsilon$-optimal for multiple such intervals. In fact, one can verify that---irrespective of the number of arms in interval $J$---there always exists an  $\varepsilon$-left-skyline (and an $\varepsilon$-right-skyline) with cardinality $O(1/\varepsilon)$; recall that for every arm $a$ the weight (mean of the underlying random variables) $\mu_a \in [0,1]$.

For any interval $J$ along with parameters $\varepsilon'>0$ and $\delta'>0$, the subroutine \textsc{LSkyline}$(J, \varepsilon', \delta')$
(respectively \textsc{RSkyline}$(J, \varepsilon', \delta')$) finds, with probability $1-\delta'$, an $\varepsilon$-left-skyline (respectively $\varepsilon$-right-skyline) within $J$. To implement these subroutines, we can use the skyline algorithm of Cheu et al.~\cite{cheu2018skyline},\footnote{We note that the Cheu et al.~\cite{cheu2018skyline} do not formulate the skyline problem explicitly from a range-searching perspective. However, one can verify that Definition~\ref{definition:skyline} conforms to the construct studied in \cite{cheu2018skyline}.} or instantiate (with $d=1$) the one developed in Section~\ref{subsection:skyline-algos} for the $d$-dimension version of the skyline problem.     

The implementations of the three subroutines mentioned above---in addition to identifying the desired set of arms---provide an empirical estimate $\widehat{\mu}_a$ of the true weight (mean) $\mu_a$ of each returned arm $a$. Specifically, with the same success probability as before, for every returned arm $a$ we have an empirical estimate $\widehat{\mu}_a$ which satisfies $|\mu_a - \widehat{\mu}_a| \leq \varepsilon'$. 

\begin{algorithm}[h]
\small
\caption{$\AlgRS$: $(\varepsilon, \delta)$-PAC algorithm for Bandit Range Searching}
\label{algorithm:BRS} 
{\bf Input:} Set of points $\mathcal{P} = \{p_a \in \mathbb{R} \}_{a=1}^n$, collection of intervals $\mathcal{I}=\{ I_i = [\ell_i, r_i] \}_{i=1}^q$, and sample access to the $n$ arms,  along with parameters $\varepsilon >0$ and $\delta>0$. \\
{\bf Output:} Set of arms $\{\alpha_1, \alpha_2, \ldots, \alpha_q\}$
\begin{algorithmic}[1]
    \State Let $e_1 < e_2 < \ldots < e_\tau$ constitute a minimum-size hitting set for $\mathcal{I}$, and write $e_0 = -\infty$ and $e_{\tau+1} = +\infty$
    \State Define slabs $S_j = [e_j, e_{j+1}]$, for $0 \leq j  \leq \tau$ 
   \State For every slab $S_j$, with $0 \leq j \leq \tau$, set arm $\beta_j = \textsc{Best}\left(S_j, \frac{\varepsilon}{3}, \tfrac{\delta}{3(\tau+1)} \right)$ along with left skyline $L_j = \textsc{LSkyline}\left(S_j, \frac{\varepsilon}{3}, \tfrac{\delta}{3(\tau+1)}\right)$ and right skyline $R_j = \textsc{RSkyline}\left(S_j, \frac{\varepsilon}{3}, \tfrac{\delta}{3(\tau+1)}\right)$ 
         \State Set $C = \bigcup_{j=0}^\tau \left( \{ \beta_j \} \cup L_j \cup R_j \right)$ \label{line:candidates} \Comment{$C$ is the candidate set of arms populated across the slabs}
       \State For each interval $I_i \in \mathcal{I}$, select arm $\alpha_i = \Argmax{c \in C \cap \arm(I_i)} \ \widehat{\mu}_c$ \label{line:argmax} \Comment{$\widehat{\mu}_c$ is the empirical estimate of $\mu_c$}   
    \State \Return $\{\alpha_1, \alpha_2, \ldots, \alpha_q\}$
\end{algorithmic}
\end{algorithm}

\begin{restatable}{theorem}{TheoremSingleDim}
\label{theorem:range-max}
Given any problem instance $(\mathcal{P}, \mathcal{I})$ with $n$ arms, algorithm $\AlgRS$ draws $O\left(\tfrac{n}{\varepsilon^2} \log(\tfrac{\tau}{\varepsilon \delta}) \right)$ samples and achieves the $(\varepsilon, \delta)$-PAC guarantee for bandit range searching (with single-dimensional weights); here $\tau$ denotes the size of the minimum-cardinality hitting set for $\mathcal{I}$.
\end{restatable}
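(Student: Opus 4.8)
The plan is to treat the two assertions---sample complexity and the $(\varepsilon,\delta)$-PAC guarantee---separately, with the guarantee being the substantive part. For correctness, I would first set up the relevant \emph{good event} $G$: across all $\tau+1$ slabs, the $3(\tau+1)$ subroutine calls (each invoked with confidence parameter $\tfrac{\delta}{3(\tau+1)}$) all succeed simultaneously, so a union bound gives $\Pr[G]\ge 1-\delta$. Henceforth I would condition on $G$, so that for each slab $S_j$ the arm $\beta_j$ is $\tfrac{\varepsilon}{3}$-optimal for $S_j$, the set $L_j$ is an $\tfrac{\varepsilon}{3}$-left-skyline within $S_j$, $R_j$ is an $\tfrac{\varepsilon}{3}$-right-skyline within $S_j$, and every returned arm $a$ carries an estimate with $|\mu_a-\muhat_a|\le \tfrac{\varepsilon}{3}$. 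It then suffices to show, for an arbitrary input interval $I=[\ell,r]$, that the selected arm $\alpha_i$ is $\varepsilon$-optimal for $I$; the PAC claim follows since $G$ already accounts for all intervals at once.

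The heart of the argument is a geometric decomposition of $I$ relative to the slabs. By Property~(P), $I$ meets a consecutive block $S_x,S_{x+1},\dots,S_y$ with $x<y$, and from the ordering of the hitting-set points I would verify that $\ell\le e_{x+1}\le r$ and $e_y\le r$, whence the leftmost piece is $I\cap S_x=[\ell,e_{x+1}]$, the rightmost piece is $I\cap S_y=[e_y,r]$, and every intermediate slab is \emph{entirely} contained in $I$, i.e.\ $S_t\subseteq I$ for $x<t<y$. Fixing an optimal arm $a^*\in\arm(I)$ (so $\mu_{a^*}=\max_{b\in\arm(I)}\mu_b$), I would split on the position of $p_{a^*}$. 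If $p_{a^*}\in[\ell,e_{x+1}]$, the defining property of the left-skyline $L_x$ (whose right endpoint $e_{x+1}$ coincides with that of $S_x$) supplies an $\tfrac{\varepsilon}{3}$-optimal arm $c\in L_x$ for $[p_{a^*},e_{x+1}]\subseteq I$, so $c\in C\cap\arm(I)$ and $\mu_c\ge \mu_{a^*}-\tfrac{\varepsilon}{3}$. Symmetrically, if $p_{a^*}\in[e_y,r]$ the right-skyline $R_y$ yields such a $c$ via $[e_y,p_{a^*}]$; and if $p_{a^*}$ lies strictly between $e_{x+1}$ and $e_y$ it sits in some intermediate $S_t\subseteq I$, so $c=\beta_t$ works because $\beta_t$ is $\tfrac{\varepsilon}{3}$-optimal over all of $\arm(S_t)\ni a^*$. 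In every case there is a candidate $c\in C\cap\arm(I)$ with $\mu_c\ge\mu_{a^*}-\tfrac{\varepsilon}{3}$, which in particular makes $C\cap\arm(I)$ nonempty so that the $\argmax$ on Line~\ref{line:argmax} is well defined.

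Chaining the three $\tfrac{\varepsilon}{3}$ slacks then closes the correctness proof: since $\alpha_i$ maximizes $\muhat$ over $C\cap\arm(I)$ and both $\alpha_i$ and $c$ have estimation error at most $\tfrac{\varepsilon}{3}$, we get $\mu_{\alpha_i}\ge \muhat_{\alpha_i}-\tfrac{\varepsilon}{3}\ge \muhat_{c}-\tfrac{\varepsilon}{3}\ge \mu_c-\tfrac{2\varepsilon}{3}\ge \mu_{a^*}-\varepsilon$, i.e.\ $\alpha_i$ is $\varepsilon$-optimal for $I$. For the sample bound, I would observe that each arm's point lies in the interior of at most one slab and on the boundary of at most two, so $\sum_j|\arm(S_j)|\le 2n$. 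Instantiating \textsc{Best} (via Even-Dar et al.~\cite{even2006action}) and the two skyline subroutines on a slab with $n_j$ arms costs $O\!\big(\tfrac{n_j}{\varepsilon^2}\log\tfrac{\tau}{\varepsilon\delta}\big)$ samples each---the $\log\tfrac1\varepsilon$ factor arising from a union bound over the $O(1/\varepsilon)$ arms of a skyline and the $\log\tfrac\tau\delta$ from $\delta'=\tfrac{\delta}{3(\tau+1)}$---and summing over slabs gives the claimed $O\!\big(\tfrac{n}{\varepsilon^2}\log\tfrac{\tau}{\varepsilon\delta}\big)$. The main obstacle I anticipate is establishing the geometric decomposition cleanly and confirming that its three pieces are matched \emph{exactly} by the left-skyline, right-skyline, and best-arm subroutines---in particular ensuring the boundary cases (when $p_{a^*}$ coincides with some $e_t$, or when $y=x+1$ so that no intermediate slab exists) are covered without gaps.
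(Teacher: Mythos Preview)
Your proposal is correct and follows essentially the same route as the paper's proof: the same good event via a union bound over the $3(\tau+1)$ subroutine calls, the same three-case analysis on whether the optimal arm $a^*$ lies in the leftmost slab $S_x$, the rightmost slab $S_y$, or an intermediate slab (handled respectively by $L_x$, $R_y$, and $\beta_t$), the same chaining $\mu_{\alpha_i}\ge \muhat_{\alpha_i}-\tfrac{\varepsilon}{3}\ge \muhat_c-\tfrac{\varepsilon}{3}\ge \mu_c-\tfrac{2\varepsilon}{3}\ge \mu_{a^*}-\varepsilon$, and the same sample-complexity accounting via $\sum_j n_j\le 2n$. The boundary worries you flag (e.g.\ $y=x+1$ or $p_{a^*}$ coinciding with some $e_t$) are harmless since the slabs are closed and the left/right-skyline cases already cover those points.
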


\begin{proof}
For any slab $S_j =[e_j, e_{j+1}]$, considered during the algorithm, write $n_j$ to denote the number of arms within it, $n_j \coloneqq |\arm(S_j)|$. The subroutine $\textsc{Best}\left(S_j, \frac{\varepsilon}{3}, \tfrac{\delta}{3(\tau+1)} \right)$ requires $O\left(\tfrac{n_j}{\varepsilon^2} \log\tfrac{3 (\tau+1)}{\delta}\right)$ samples and it finds, with probability $\left( 1 - \tfrac{\delta}{3(\tau+1)}\right)$, an arm $\beta_j \in \arm(S_j)$ and an estimate $\widehat{\mu}_{\beta_j} \in [0,1]$ with the properties that $\mu_{\beta_j} \geq \mu_a - \nicefrac{\varepsilon}{3}$, for all $a \in \arm(S_j)$, and $|\mu_{\beta_j} - \widehat{\mu}_{\beta_j}| \leq \nicefrac{\varepsilon}{3}$. 

The subroutines $\textsc{LSkyline}\left(S_j, \frac{\varepsilon}{3}, \tfrac{\delta}{3(\tau+1)}\right)$ and $\textsc{RSkyline}\left(S_j, \frac{\varepsilon}{3}, \tfrac{\delta}{3(\tau+1)}\right)$ require $O\left(\tfrac{n_j}{\varepsilon^2} \log\tfrac{3 (\tau+1)}{\varepsilon \delta}\right)$ samples each (Theorem 2, \cite{cheu2018skyline}). In particular, subroutine $\textsc{LSkyline} \left(S_j, \frac{\varepsilon}{3}, \tfrac{\delta}{3(\tau+1)}\right)$ finds, with probability $\left( 1 - \tfrac{\delta}{3(\tau+1)}\right)$, an $(\varepsilon/3)$-left-skyline $L_j$ along with estimates $\widehat{\mu}_c$, for every arm $c \in L_j$, such that $|\widehat{\mu}_c - \mu_c| \leq \nicefrac{\varepsilon}{3}$. A similar guarantee holds for $\textsc{RSkyline}\left(S_j, \frac{\varepsilon}{3}, \tfrac{\delta}{3(\tau+1)}\right)$, which finds $R_j$.

To upper bound the number of samples required by $\AlgRS$, we note that $\sum_{j=0}^{\tau+1} n_j \leq 2 n$. This inequality follows from the fact that any arm can be a part of at most two successive slabs. Now, summing over the sample complexities of the subroutines, across all the slabs, we get that overall $O\left(\tfrac{n}{\varepsilon^2} \log(\tfrac{\tau}{\varepsilon \delta}) \right)$ samples are drawn in the algorithm. 

Next, we complete the proof by showing that $\AlgRS$ achieves the $(\varepsilon, \delta)$-PAC guarantee. Write $\mathcal{E}$ to denote the event that the executions of the subroutines succeed across all the $\tau+1$ slabs. That is, under $\mathcal{E}$, for all the slabs, the subroutines find the desired set of arms along with accurate-enough estimates. Given that the success probability of each subroutine is  at least $\left( 1 - \tfrac{\delta}{3(\tau+1)}\right)$ and there are $3(\tau+1)$ subroutine instantiations, we get (via the union bound) that the probability $\mathbb{P} \{ \mathcal{E} \} \geq 1 - \delta$. 

We will show that, under event $\mathcal{E}$, for every input interval $I_i \in \mathcal{I}$, the arm $\alpha_i \in \arm(I_i)$ selected in Line~\ref{line:argmax} is $\varepsilon$-optimal for $I_i$.  Hence, the desired PAC guarantee holds. 

Consider any interval $I_i \in \mathcal{I}$ and let $a^* \in \arm(I_i)$ be an optimal arm for $I_i$, i.e., $\mu_{a^*} \geq \mu_a$ for all $a \in \arm(I_i)$. Property (P) of our slab construction (see Section~\ref{section:notation}) ensures that interval $I_i$ is partitioned among two or more slabs, $S_x, S_{x+1}, \ldots, S_y$; in particular, $x<y$. Here, $S_x$ is the left-most slab that intersects with $I_i$ and $S_y$ is the rightmost. The intermediate slabs (if any), $S_{x+1}, \ldots S_{y-1}$, are completely contained in $I_i$. We will perform a case analysis based on whether $a^*$ is contained in $S_x$, $S_y$, or one of the intermediate slabs. 

\noindent
Case {\rm I}: the arm $a^*$ is within an intermediate slab, i.e., $a^* \in \arm(S_t)$ with $x < t <y$. Here, under event $\mathcal{E}$, the arm $\beta_t$ (obtained by executing $\textsc{Best}$ on $S_t \subset I_i$) satisfies $\mu_{\beta_t} \geq \mu_{a^*} - \nicefrac{\varepsilon}{3}$. Note that $\beta_t \in \arm(I_i)$ is included in the candidate set $C$ (Line \ref{line:candidates}). Hence, $\beta_t$ is considered in the $\argmax$ executed in Line \ref{line:argmax} and we have $\widehat{\mu}_{\alpha_i} \geq \widehat{\mu}_{\beta_t}$. Also, event $\mathcal{E}$ gives us $|\mu_{\alpha_i} - \widehat{\mu}_{\alpha_i}| \leq \nicefrac{\varepsilon}{3}$ and $|\mu_{\beta_t} - \widehat{\mu}_{\beta_t}| \leq \nicefrac{\varepsilon}{3}$. Combining these inequalities we get $\mu_{\alpha_i} \geq \widehat{\mu}_{\alpha_i} - \nicefrac{\varepsilon}{3} \geq \widehat{\mu}_{\beta_t} - \nicefrac{\varepsilon}{3} \geq {\mu}_{\beta_t} - \nicefrac{2\varepsilon}{3} \geq \mu_{a^*} - \varepsilon$. Therefore, the selected arm $\alpha_i$ is $\varepsilon$-optimal for $I_i$.

\noindent
Case {\rm II}: $a^* \in \arm(S_x)$. In this case, we consider the $(\varepsilon/3)$-left-skyline, $L_x$, computed for slab $S_x$ (via subroutine $\textsc{LSkyline}$). Note that under $\mathcal{E}$, we have an arm $c \in L_x \subseteq \arm(S_x)$ which is $\varepsilon/3$-optimal for the interval $[p_{a^*}, r_x]$, where $r_x$ is the right endpoint of slab $S_x$; in particular, we have $\mu_c \geq \mu_{a^*} - \nicefrac{\varepsilon}{3}$. Since $r_x$ is strictly contained in the interval $I_i$, arm $c$ belongs to $\arm(I_i)$. Therefore, $c \in L_x \cap \arm(I_i) \subseteq C \cap \arm(I_i)$. This containment ensures that for the selected arm $\alpha_i$ we have $\widehat{\mu}_{\alpha_i} \geq \widehat{\mu}_{c}$. Also, event $\mathcal{E}$ gives us $|\mu_{\alpha_i} - \widehat{\mu}_{\alpha_i}| \leq \nicefrac{\varepsilon}{3}$ and $|\mu_{c} - \widehat{\mu}_{c}| \leq \nicefrac{\varepsilon}{3}$. Combining these inequalities we get $\mu_{\alpha_i} \geq \mu_{a^*} - \varepsilon$ and, hence, in this case as well $\alpha_i$ is $\varepsilon$-optimal for $I_i$. 

\noindent
Case {\rm III}: $a^* \in \arm(S_y)$. The analysis here relies on the $(\varepsilon/3)$-right-skyline, $R_y$, of slab $S_y$ and is otherwise identical to the previous case. For brevity, we omit repeating the details. 

Overall, we get that, under event $\mathcal{E}$, which holds with probability at least $(1-\delta)$, the arm $\alpha_i$ is $\varepsilon$-optimal for each interval $I_i \in \mathcal{I}$. This shows that $\AlgRS$ achieves the $(\varepsilon, \delta)$-PAC guarantee and completes the proof. 
\end{proof}

\section{Bandit Range Searching with Multi-Dimensional Weights}
\label{section:multi-dimension-algos}

This subsection develops our PAC algorithm for range-searching with $d$-dimensional weights. Specifically, for each arm $a \in [n]$,  the (unknown) weight/mean is a $d$-dimensional vector, $\mu_a \in [0,1]^d$, and the problem objective is to find $\varepsilon$-Pareto optimal arms (Definition \ref{definition:eps-Pareto-optimal}) within each input interval. 

The range aspects of the setup are as mentioned previously: each arm $a \in [n]$ is associated with a (given) point $p_a \in \mathbb{R}$ and we are also given a collection of intervals $\mathcal{I}=\left\{I_i = [\ell_i, r_i] \right\}_{i=1}^q$. 

Our algorithm, $\AlgDRS$ (Algorithm \ref{algorithm:dBRS} in Section \ref{subsection:pac-multi-dim}), begins by constructing slabs $\{S_j\}_j$ from an optimal hitting set of the input collection intervals $\mathcal{I}$. Then, $\AlgDRS$ executes the following two subroutines for each slab $S_j$: $\AlgDSK(\cdot)$ and $\AlgDRSK(\cdot)$. The subset of arms collected through these executions serve as a candidate set $C$ and the algorithm computes the desired set of arms by selecting Pareto optimal (with respect to the empirically estimated weights) \emph{candidate} arms within each input interval $I_i$. 

The subroutines $\AlgDSK$ and $\AlgDRSK$ solve the \emph{skyline problem}, which in fact can be viewed as a special case of bandit range searching. Specifically, in the skyline problem the points associated with the arms, $\{ p_a \in \mathbb{R}\}_a$, themselves generate the collection of query intervals $\mathcal{I}$: in the left-skyline problems we have $\mathcal{I} = \{ [p_a, p_{\infty}] \}_{a}$ and in the right-skyline problems $\mathcal{I} = \{ [p_{-\infty}, p_a]\}_{a}$, where $p_{\infty} \coloneqq \max_{a} \  p_a$ and $p_{-\infty} \coloneqq \min_{a} \ p_a$.

Prior work has only focused on the single-dimensional version of the skyline problem (see \cite{cheu2018skyline}). Hence, towards  implementations of the subroutines $\AlgDSK$ and $\AlgDRSK$, in Section \ref{subsection:skyline-algos} we develop a novel algorithm for computing $\varepsilon$-skylines with $d$-dimensional weights.

\subsection{Skyline with Multi-Dimensional Weights}
\label{subsection:skyline-algos}

In this section we develop a PAC algorithm for the skyline problem with weights in $[0,1]^d$. 

\begin{definition}[$\varepsilon$-skyline for $d$-dimensional weights]
\label{definition:skyline-multi-d}
For an interval $J=[\ell,r]$ and parameter $\varepsilon >0$, a set of arms $L \subseteq \arm(J)$ is said to be an $\varepsilon$-left-skyline within interval $J$ iff 
(i) for each arm $b \in \arm(J)$, the set $L$ contains an $\varepsilon$-Pareto optimal set $T$ for the interval $[p_b, r]$, and 
(ii) each arm in $L$ is $\varepsilon$-Pareto optimal for some interval $[p_b, r]$, with $b \in \arm(J)$.   
\end{definition}

Analogously, an $\varepsilon$-right-skyline, $R$, is defined considering
 intervals $\{ [\ell, p_b]\}_{b \in \arm(J)}$. The following characterization of $\varepsilon$-skylines will be used in the analysis; a proof of this proposition appears in Appendix~\ref{appendix:skyline-equivalence-proof}.  

\begin{restatable}{proposition}{PropositionSkylineCharacter}
\label{propsition:syline-multi-d-alternative}
A set of arms $L \subseteq \arm(J)$ is an $\varepsilon$-left-skyline within interval $J=[\ell, r]$ iff 
(i) for every arm $b \in \arm(J)$, there exists an arm $\beta \in L$ that satisfies $p_\beta \geq p_b$ and $\mu_\beta \geq  \mu_b - \varepsilon \mathbbm{1} $, and (ii) for each arm $\beta \in L$, there does not exist an arm $x \in \arm(J)$ such that $p_x \geq p_\beta$ and $\mu_x \geq \mu_\beta + \varepsilon \mathbbm{1}$.
\end{restatable}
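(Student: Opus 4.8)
The plan is to prove both implications of the stated equivalence, throughout working with the arm set $A_b := \arm([p_b,r]) = \{a \in \arm(J) : p_a \ge p_b\}$, since this is precisely the set that Definition~\ref{definition:skyline-multi-d} and Definition~\ref{definition:eps-Pareto-optimal} quantify over, and the only role of positions inside $[p_b,r]$ is to determine membership in $A_b$.

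For the forward direction (Definition~\ref{definition:skyline-multi-d} $\Rightarrow$ Proposition), condition~(i) of the proposition follows by applying part~(i) of the definition with the arm $b$ itself: the guaranteed $\varepsilon$-Pareto optimal set $T \subseteq L$ for $[p_b,r]$ contains, by clause~(a) of Definition~\ref{definition:eps-Pareto-optimal} applied to $b \in A_b$, an arm $\beta$ with $\mu_\beta \ge \mu_b - \varepsilon\mathbbm{1}$, and $\beta \in T \subseteq A_b$ forces $p_\beta \ge p_b$. For condition~(ii), I take any $\beta \in L$, use part~(ii) of the definition to obtain an interval $[p_c,r]$ for which $\beta$ is $\varepsilon$-Pareto optimal (i.e.\ non-$\varepsilon$-dominated in $\arm([p_c,r])$), and argue by contradiction: a hypothetical $x \in \arm(J)$ with $p_x \ge p_\beta$ and $\mu_x \ge \mu_\beta + \varepsilon\mathbbm{1}$ satisfies $p_c \le p_\beta \le p_x \le r$, hence lies in $\arm([p_c,r])$, contradicting the non-domination of $\beta$ on $[p_c,r]$. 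Both steps are short.

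The reverse direction (Proposition $\Rightarrow$ Definition) is where the work lies. Part~(ii) of the definition is immediate: for $\beta \in L$ take the interval $[p_\beta,r]$; then $\beta \in \arm([p_\beta,r])$, and Proposition~(ii) states that no arm with position $\ge p_\beta$ can $\varepsilon$-dominate $\beta$, so $\beta$ is non-$\varepsilon$-dominated in $\arm([p_\beta,r])$. For part~(i), given $b$ I would set $T := \{\beta \in L \cap A_b : \text{no } x \in A_b \text{ has } \mu_x \ge \mu_\beta + \varepsilon\mathbbm{1}\}$, i.e.\ the arms of $L$ inside $[p_b,r]$ that survive $\varepsilon$-domination within $A_b$. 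Clause~(b) of Definition~\ref{definition:eps-Pareto-optimal} then holds for $T$ by construction, so the entire burden shifts to the covering clause~(a): every $a \in A_b$ must be $\varepsilon$-dominated in weight by some member of the pruned set $T$.

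I expect clause~(a) to be the main obstacle, precisely because pruning can discard the very arm that Proposition~(i) hands us. My plan is a weight-maximality argument: fix $a$, let $W := \{\beta \in L \cap A_b : \mu_\beta \ge \mu_a - \varepsilon\mathbbm{1}\}$ (nonempty by Proposition~(i), whose witness has position $\ge p_a \ge p_b$ and so lies in $A_b$), and pick $\gamma \in W$ whose weight $\mu_\gamma$ is maximal under the component-wise order. To show $\gamma \in T$, suppose some $x \in A_b$ has $\mu_x \ge \mu_\gamma + \varepsilon\mathbbm{1}$; Proposition~(i) applied to $x$ yields $\beta' \in L \cap A_b$ with $\mu_{\beta'} \ge \mu_x - \varepsilon\mathbbm{1} \ge \mu_\gamma$, so $\beta' \in W$ and maximality forces $\mu_{\beta'} = \mu_\gamma$. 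Here Proposition~(ii) is essential: it guarantees that any $\varepsilon$-dominator lies strictly to the left of the arm it dominates, which is what prevents such dominators from re-dominating successive witnesses and underlies the finiteness/termination of the resulting domination chain (weights live in $[0,1]^d$ and grow along any chain). The genuinely delicate point is the exact-tie configuration $\mu_x = \mu_\gamma + \varepsilon\mathbbm{1}$, where the weight potential fails to strictly increase; I would resolve this by refining the maximality order with a tie-break (or an infinitesimal perturbation of the weights), and this boundary bookkeeping is the part I expect to require the most care.
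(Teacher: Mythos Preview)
Your forward direction and the reverse-(ii) argument match the paper's proof essentially line for line. For the reverse direction, part~(i), the paper is considerably terser than you: it collects, for each $b' \in \arm([p_b,r])$, a witness $\beta' \in L$ supplied by condition~(i) of the proposition, and then asserts in a single sentence that these witnesses ``encapsulate an $\varepsilon$-Pareto optimal subset $T$ for the interval $[p_b, r]$,'' without ever verifying clause~(b) of Definition~\ref{definition:eps-Pareto-optimal}. Your prune-then-argue-coverage plan is a more honest attempt to close this step, and you correctly locate the entire difficulty in the exact-tie configuration $\mu_x = \mu_\gamma + \varepsilon\mathbbm{1}$.

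That tie case, however, cannot be repaired by refining the maximality order or by perturbation inside $L$: it is a genuine counterexample to the reverse implication as stated. Take $d = 1$, $\varepsilon = 0.1$, two arms with $(p_1,\mu_1) = (1,\,0.5)$ and $(p_2,\mu_2) = (2,\,0.4)$, and set $L = \{2\}$. Both conditions of the proposition hold for $L$ (arm~$2$ covers arm~$1$ with the equality $\mu_2 = \mu_1 - \varepsilon$, and no arm at position $\ge p_2$ dominates arm~$2$), yet $L$ contains no $\varepsilon$-Pareto optimal set for $[p_1,r]$: the only nonempty candidate $T=\{2\}$ violates clause~(b) of Definition~\ref{definition:eps-Pareto-optimal} because $\mu_1 = \mu_2 + \varepsilon$. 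So the gap you spotted is not a bookkeeping artifact but a boundary flaw in the statement itself (non-strict inequalities on both sides of the equivalence); the paper's one-line assertion simply passes over it. Your instinct that this boundary is the crux was exactly right---there is just no fix available within $L$.
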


In Proposition \ref{propsition:syline-multi-d-alternative}---and in the remainder of this section---the inequalities are enforced component-wise. At a high level, the first condition of the proposition requires that for each arm $b$ in the given interval, $L$ contains a dominating (componentwise and within a factor of $\varepsilon$) point to the right of $b$. The second condition mandates that any arm in $L$ is itself not dominated (componentwise and beyond a factor of $\varepsilon$) by arms to its right.

Our algorithm, $\AlgDSK$ (Algorithm \ref{Algorithm:SkylineUsingEpsCubes}), starts by setting $A_1$ as the set of all arms within the given interval and $\varepsilon_1$ as an initial approximation parameter. The algorithm then considers a partition of $[0,1]^d$ into hypercubes of side length $\varepsilon_1/4$. Drawing a conservative number of samples, the algorithm finds an estimate $\widehat{\mu}_a \in [0,1]^d$ for each arm $a \in A_1$ and partitions the set of arms $A_1$ itself based on how the estimates fall across different hypercubes. 

For any hypercube $B$, let $B(A_1)$ be the set of arms (in $A_1$) whose estimates are within $B$. We will establish that, if the cardinality of $B(A_1)$ is sufficiently large, then we can drop half of the arms in $B(A_1)$ from consideration, and still not loose $\varepsilon$-Pareto dominating arms. In particular, from every large-sized set $B(A_1)$, the algorithm removes from consideration half of the arms $x \in B(A_1)$ whose $p_x$ value is less than the median of $\{ p_y \}_{y \in B(A_1)}$. This update ensures that the number of arms under consideration decreases geometrically, and at the same time, the set of arms that remain, say $A_2$, continues to include an approximate skyline. The algorithm repeats analogous steps over $A_2$, with an updated approximation parameter $\varepsilon_2 = 3\varepsilon_1/4$. Iteratively, the algorithm continues until the number of remaining arms is sufficiently small. At this point, the algorithm concludes by sampling the final set of arms $A_T$ (to obtain accurate-enough estimates of their weights) and returning a skyline (based on the final estimates) within $A_T$. 

We will now define constructs that will be useful in the design and analysis of our algorithm. For $\eta \in (0,1]$, define a lattice $L(\eta)$ as the set of vectors in $[0,1]^d$ whose components are integer multiples of $\eta$, i.e., $L(\eta) \coloneqq \left\{  (\kappa_1 \eta, \kappa_2 \eta, \ldots, \kappa_d \eta) \in [0, 1]^d \mid \kappa_1, \ldots, \kappa_d \in \mathbb{Z} \right\}$. For each vector $g \in L(\eta)$, define a (hyper) cube around it as 
$B(\eta, g) \coloneqq \left\{\nu \in [0,1]^d \mid \norm{\nu - g}_{\infty} \leq \tfrac{\eta}{2} \right\}$. Note that the collection of cubes $\{B(\eta, g)\}_{g \in L(\eta)}$ forms a cover of $[0,1]^d$. Faces of these cubes can intersect. However, for ease of presentation and given that the interiors of the cubes are pairwise disjoint, we will consider them to constitute a partition of $[0,1]^d$; in particular, by breaking ties arbitrarily, we will assume that  each $\nu \in [0,1]^d$ belongs to exactly one cube in $\{B(\eta, g)\}_{g \in L(\eta)}$.

In each iteration $t$, with approximation parameter $\varepsilon_t >0$, the algorithm considers the collection of cubes obtained by setting $\eta = \varepsilon_t/4$, i.e., it considers 
\begin{align*}
\mathcal{B}_t \coloneqq \left\{B \left( \frac{\varepsilon_t}{4}, g \right) \right\}_{g \in L\left( \frac{\varepsilon_t}{4}\right)}.
\end{align*}
Note that the cardinality of $\mathcal{B}_t$ is at most  $\left( \frac{4}{\varepsilon_t} \right)^d$.
\footnote{The number of hypercubes is actually $(\lceil \tfrac{4}{\varepsilon_i} \rceil)^d$, but we assume $\tfrac{4}{\varepsilon_i}$ is an integer for ease of analysis.} 

\begin{algorithm}[h]
\small
\caption{$\AlgDSK$: finds $\varepsilon$-left-skyline with probability at least $1-\delta$}
\label{Algorithm:SkylineUsingEpsCubes}
{\bf Input:} Interval $J$, with sample access to arms in $\arm(J)$, points $\{p_a\}_{a \in \arm(J)}$, and parameters $\varepsilon, \delta > 0$. {\bf Output:} Set of arms $S \subseteq \arm(J)$ 
\begin{algorithmic}[1]
    \State Initialize set $A_1 = \arm(J)$, parameters $\varepsilon_{1} = \nicefrac{\varepsilon}{5}$, $\delta_{1} = \nicefrac{\delta}{2}$, and iteration count $t =1$
    \While{$|A_t| > 120\left(\tfrac{4}{\varepsilon_t}\right)^d$} \label{line:while-loop-condition}
        \State For each arm $a \in A_t$, find an empirical estimate, $\widehat{\mu}_a \in [0,1]^d$, of its weight, by sampling the arm $\dfrac{8}{\varepsilon_t^2} \log \left( \left(\dfrac{4}{\varepsilon_t}\right)^d \dfrac{50d}{\delta_t} \right)$ times \label{line:pull}
        \State Initialize the set of arms to drop, $D_t = \emptyset$
\ForAll{cubes $B \in \mathcal{B}_t$}
\State Write $B(A_t) \coloneqq \left\{ x \in A_t \mid \muhat_x \in B \right\} $    \Comment{$B(A_t)$ is defined using the \emph{estimated} weights} \label{line:bat}
\State  \label{line:dropping-threshold}  If $|B(A_t)| > \frac{|A_t|}{10} \left(\frac{\varepsilon_t}{4}\right)^d$, then $D_t \leftarrow D_t \cup \left\{ x \in B(A_t) : p_x \text{ is less than the median of } \{p_y\}_{y \in B(A_t)} \right\}$
\EndFor
        \State Set $A_{t+1} = A_t \setminus D_t$, $\varepsilon_{t+1} = \nicefrac{3\varepsilon_{t}}{4}$, $\delta_{t+1} = \nicefrac{\delta_t}{2}$, and update $t \longleftarrow t + 1$ \label{line:elimination}
    \EndWhile
  \State For each arm $a \in A_t$, find an estimate, $\widehat{\mu}_a \in [0,1]^d$, of its weight, by sampling the arm $\dfrac{1}{\varepsilon_t^2} \log \left(\dfrac{|A_t| d}{\delta_t} \right)$ times \label{line:pruning}
    \State  Set $\widetilde{D} \coloneqq \{ z \in A_t  \mid p_z \leq p_x \text{  and } \widehat{\mu}_z < \widehat{\mu}_x \text{ for some } x \in A_t \}$ \Comment{With respect to the estimates, condition (ii) of Proposition \ref{propsition:syline-multi-d-alternative} does \emph{not} hold for arms in $\widetilde{D}$}
    \State \Return $A_t \setminus \widetilde{D}$
\end{algorithmic}
\end{algorithm}

In the analysis, we will consider a \emph{representative arm} $r(B)$ for each cube $B \in \mathcal{B}_t$: among all arms $b$ whose weight $\mu_b$ is in $B$, arm $r(B)$ is the one with the maximum $p_{r(B)}$ value. Specifically,   
\begin{definition}[Representative Arms] In any iteration $t$---with set of arms $A_t$ under consideration---an arm $r(B) \in A_t$ is said to be the representative of cube $B \in \mathcal{B}_t$ iff $\mu_{r(B)} \in B$ and $p_{r(B)} \geq p_b$ for all arms $b \in A_t$ whose weight $\mu_b \in B$. 
\end{definition}
Write $R_t \coloneqq \{r(B)\}_{B \in \mathcal{B}_t}$ to denote the set of all representatives in iteration $t$. Note that representative arms are defined considering the exact (but unknown) weights $\mu_b \in [0,1]^d$. Still the definition is relevant, since we use these arms solely for the purposes of analysis, and not in the algorithm (which only has access to estimated weights $\widehat{\mu}_b$s). Also, note that the size of $R_t$ is at most that of $\mathcal{B}_t$ and, hence, $|R_t|  \leq \left( \frac{4}{\varepsilon_t} \right)^d$.

The next lemma (Lemma \ref{lemma:skyline-initial-representative}) highlights a significance of representative arms--it asserts that $R_1$ (the set of representative arms in the first iteration) satisfies condition (i) of Proposition \ref{propsition:syline-multi-d-alternative}, with an (absolute) approximation factor of $\nicefrac{\varepsilon}{20}$.
Then, Lemma \ref{lemma:iterative-pac-guarantee} (stated below) guarantees that, with high probability, the algorithm essentially maintains condition (i)  between consecutive $R_t$s; Appendix \ref{appendix:reps} contains a proof of this lemma.

\begin{restatable}{lemma}{LemmaInitialReps}
\label{lemma:skyline-initial-representative}
For every arm $b \in \arm(J)$, there exists an arm $r \in R_1$ that satisfies $p_r \geq p_b$ and $\mu_r \geq  \mu_b - \frac{\varepsilon}{20} \mathbbm{1}$.
\end{restatable}
\begin{proof}
For any arm $b \in \arm(J) = A_1$, write $B$ to denote the cube that contains $\mu_b \in [0,1]^d$. This cube's representative arm $r(B)$ satisfies $p_{r(B)} \geq p_b$ and $\mu_{r(B)}  \geq \mu_b - \tfrac{\varepsilon_1}{4} \mathbbm{1}$; recall that the side length of $B$ is $\nicefrac{\varepsilon_1}{4}$. The initialization $\varepsilon_1 = \nicefrac{\varepsilon}{5}$ gives us $\mu_{r(B)}  \geq \mu_b - \tfrac{\varepsilon}{20} \mathbbm{1}$. 

That is, for each arm $b \in \arm(J)$ there exists an arm $r(B) \in R_1$ that satisfies condition (i) of Proposition \ref{propsition:syline-multi-d-alternative} and the lemma follows.\footnote{Indeed, the algorithm does not have a priori access to $\mu_r$s and, hence, cannot explicitly identify $R_1$ within $A_1$.} 
\end{proof}

\begin{restatable}{lemma}{LemmaRepsToReps}
\label{lemma:iterative-pac-guarantee}
In any iteration $t$, with probability $1-\delta_t$, for every representative arm $r \in R_t$, there exists an arm $\gamma \in R_{t+1}$ such that $\mu_{\gamma}  \geq \mu_r - \varepsilon_t \mathbbm{1}$ and $p_{\gamma} \geq p_r$.
\end{restatable}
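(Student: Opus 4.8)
The plan is to isolate a high-probability \emph{good event} for iteration $t$ under which the claim holds deterministically, and then to verify the claim separately depending on whether the representative $r$ survives the elimination step. The delicate feature of this lemma is that $|A_t|$ can be far larger than the number of cubes $(4/\varepsilon_t)^d$, so the sampling in Line~\ref{line:pull} cannot buy accurate estimates for \emph{all} arms (there is no $\log|A_t|$ term in the sample count); it only pays for roughly $(4/\varepsilon_t)^d$ of them. The main idea is to circumvent this with a Markov-type bound that controls the \emph{number} of misestimated arms without naming them in advance.

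Call an arm $a \in A_t$ \emph{inaccurate} if $\norm{\muhat_a - \mu_a}_\infty > \varepsilon_t/4$. First I would do the concentration bookkeeping: a coordinate-wise Hoeffding bound, together with the sample count in Line~\ref{line:pull}, makes any fixed arm inaccurate with probability at most $\tfrac{\delta_t}{25}\left(\tfrac{\varepsilon_t}{4}\right)^d$. I would then define the good event $\mathcal{G}_t$ as the conjunction of (a) every representative in $R_t$ is accurate, and (b) the total number of inaccurate arms in $A_t$ is at most $\tfrac{|A_t|}{20}\left(\tfrac{\varepsilon_t}{4}\right)^d$. Since $|R_t| \le (4/\varepsilon_t)^d$, a union bound makes (a) fail with probability at most $\delta_t/25$; and since the expected number of inaccurate arms is at most $|A_t|\tfrac{\delta_t}{25}(\tfrac{\varepsilon_t}{4})^d$, Markov's inequality makes (b) fail with probability at most $\tfrac{4}{5}\delta_t$. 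Hence $\mathbb{P}\{\mathcal{G}_t\} > 1-\delta_t$. Clause (b) is exactly the device that replaces an unaffordable union bound over all arms.

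Next, assuming $\mathcal{G}_t$, I would fix any $r = r(B) \in R_t$ and split on whether $r$ survives. If $r \in A_{t+1}$, take $\gamma$ to be the representative in $R_{t+1}$ of the $\mathcal{B}_{t+1}$-cube containing $\mu_r$; as $r$ itself witnesses that cube's nonemptiness, $p_\gamma \ge p_r$, and since $\gamma$ and $r$ share a cube of side $\varepsilon_{t+1}/4 = \tfrac{3}{16}\varepsilon_t$ we get $\mu_\gamma \ge \mu_r - \tfrac{3}{16}\varepsilon_t\mathbbm{1} \ge \mu_r - \varepsilon_t\mathbbm{1}$. This case needs no accuracy at all.

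The substantive case, which I expect to be the main obstacle, is when $r$ is dropped. Then $\muhat_r$ lies in some \emph{heavy} estimated cube $B'$ with $|B'(A_t)| > \tfrac{|A_t|}{10}(\tfrac{\varepsilon_t}{4})^d$, and $p_r$ is below the median of $\{p_y\}_{y\in B'(A_t)}$, so at least $\tfrac{1}{2}|B'(A_t)|$ arms of $B'(A_t)$ survive, each with $p$-value strictly above $p_r$. The crux is that clause (b) forbids \emph{all} these survivors from being inaccurate: the number of inaccurate arms is below $\tfrac{|A_t|}{20}(\tfrac{\varepsilon_t}{4})^d$, which is strictly smaller than the survivor count $\tfrac{1}{2}|B'(A_t)|$. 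Hence some surviving arm $s$ is accurate and has $p_s > p_r$. Because $\muhat_s$ and $\muhat_r$ both lie in the common cube $B'$ (side $\varepsilon_t/4$) and both $s,r$ are accurate, three triangle-inequality steps give $\norm{\mu_s - \mu_r}_\infty \le \tfrac{3}{4}\varepsilon_t$, i.e.\ $\mu_s \ge \mu_r - \tfrac{3}{4}\varepsilon_t\mathbbm{1}$. Finally I would push $s$ forward as before: letting $\gamma \in R_{t+1}$ be the representative of the $\mathcal{B}_{t+1}$-cube containing $\mu_s$, we obtain $p_\gamma \ge p_s > p_r$ and $\mu_\gamma \ge \mu_s - \tfrac{3}{16}\varepsilon_t\mathbbm{1} \ge \mu_r - \tfrac{15}{16}\varepsilon_t\mathbbm{1} \ge \mu_r - \varepsilon_t\mathbbm{1}$, as required. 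The constants $120, 10, 50$ and the factor $8$ in the sample count are precisely tuned so that the Markov step and this pigeonhole over survivors both succeed, with the generous $\varepsilon_t$ slack (versus the $\varepsilon_t/4$ cube side) absorbing the accumulated estimation error.
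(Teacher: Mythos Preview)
Your proposal is correct and follows essentially the same approach as the paper's proof: the good event $\mathcal{G}_t$ with clauses (a) and (b) is exactly the paper's $\mathcal{E}_1 \cap \mathcal{E}_2$, the Hoeffding-plus-union-bound for (a) and the Markov argument for (b) are identical, and the case split on whether $r$ survives---followed by the three-term triangle inequality and the push to the $\mathcal{B}_{t+1}$-representative---matches the paper's Steps ({\rm I}) and ({\rm II}) line for line. The only differences are cosmetic constants: the paper bounds the number of inaccurate arms by $\tfrac{|A_t|}{30}(\tfrac{\varepsilon_t}{4})^d$ rather than your $\tfrac{|A_t|}{20}(\tfrac{\varepsilon_t}{4})^d$, giving a slightly roomier pigeonhole, but your strict-inequality version works just as well.
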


The following lemma bounds the number of samples required by $\AlgDSK$. Its proof is deferred to Appendix \ref{appendix:skyline-sample-complexity-proof}.
\begin{restatable}{lemma}{LemmaSkylineSampleComplexity}
\label{lemma:skyline-sample-complexity}
For any given interval $J$ with $m$ arms ($m = |\arm(J)|$), $\AlgDSK$ draws $O\left(\frac{md}{\varepsilon^2}\log \frac{d}{\varepsilon \delta}\right)$ samples.
\end{restatable}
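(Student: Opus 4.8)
The plan is to account for the samples drawn in the two phases of $\AlgDSK$ separately: the elimination phase (the \textbf{while} loop, Lines \ref{line:pull}--\ref{line:elimination}) and the final pruning phase (Line \ref{line:pruning} onward). In iteration $t$ of the loop each surviving arm is sampled $\tfrac{8}{\varepsilon_t^2}\log\!\big((4/\varepsilon_t)^d \tfrac{50d}{\delta_t}\big)$ times, so the loop draws $\sum_t |A_t|\,\tfrac{8}{\varepsilon_t^2}\log(\cdots)$ samples, while the final phase draws $|A_T|\,\tfrac{1}{\varepsilon_T^2}\log(|A_T|d/\delta_T)$, where $T$ is the (finite) index at which the loop exits. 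Since $\varepsilon_{t+1}=\tfrac34\varepsilon_t$ and $\delta_{t+1}=\tfrac12\delta_t$, we have $\varepsilon_t^{-2}=\varepsilon_1^{-2}(16/9)^{t-1}$, $\log(1/\varepsilon_t)=O(\log(1/\varepsilon)+t)$ and $\log(1/\delta_t)=O(\log(1/\delta)+t)$; the whole calculation hinges on how fast $|A_t|$ shrinks relative to the growth of $\varepsilon_t^{-2}$.

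The crucial step is a geometric-decrease bound $|A_{t+1}|\le c\,|A_t|$ with a constant $c$ \emph{strictly below} $9/16$. I would argue as follows. Call a cube $B\in\mathcal{B}_t$ \emph{heavy} if it triggers the drop in Line \ref{line:dropping-threshold}, i.e. $|B(A_t)|>\tfrac{|A_t|}{10}(\varepsilon_t/4)^d$. Summing the threshold over the at most $(4/\varepsilon_t)^d$ cubes shows that the arms lying in non-heavy cubes number at most $|A_t|/10$. Writing $s\le|A_t|/10$ for this count, the heavy cubes contain $|A_t|-s$ arms, and from each heavy cube we retain only the upper-median half, i.e. at most $\tfrac12|B(A_t)|+\tfrac12$ arms. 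Hence the number retained is at most $s+\tfrac12(|A_t|-s)+\tfrac12 h=\tfrac{|A_t|}{2}+\tfrac{s}{2}+\tfrac{h}{2}$, where $h$ is the number of heavy cubes. Using $s\le|A_t|/10$ and the \textbf{while}-condition $(4/\varepsilon_t)^d<|A_t|/120$ (which caps $h\le(4/\varepsilon_t)^d<|A_t|/120$), this is at most $|A_t|\big(\tfrac12+\tfrac1{20}+\tfrac1{240}\big)=\tfrac{133}{240}|A_t|$. Thus $c=\tfrac{133}{240}<\tfrac9{16}$, so $\rho\coloneqq\tfrac{16}{9}c<1$ and $\tfrac{|A_t|}{\varepsilon_t^2}\le\tfrac{m}{\varepsilon_1^2}\,\rho^{\,t-1}$.

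With this in hand the elimination-phase sum is straightforward: writing the logarithmic factor as $C_0+(t-1)C_1$ with $C_0=O(d\log\tfrac1\varepsilon+\log d+\log\tfrac1\delta)$ and $C_1=O(d)$, I would bound $\sum_t\rho^{t-1}\big(C_0+(t-1)C_1\big)=O(C_0+C_1)$ via $\sum_k\rho^k=O(1)$ and $\sum_k k\rho^k=O(1)$ (the constants being $\tfrac1{1-\rho}$ and $\tfrac1{(1-\rho)^2}$). Multiplying by $\tfrac{8m}{\varepsilon_1^2}=O(m/\varepsilon^2)$ and absorbing $C_1=O(d)$ into $C_0$ yields the target $O\!\big(\tfrac{md}{\varepsilon^2}\log\tfrac{d}{\varepsilon\delta}\big)$ for the loop. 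For the final phase I would use that the loop exited, so $|A_T|\le120(4/\varepsilon_T)^d$; this both bounds $\log|A_T|=O(d\log(1/\varepsilon_T))=O(d\log\tfrac1\varepsilon+dT)$ and, together with $|A_T|\le|A_{T-1}|$ and $\varepsilon_T=\tfrac34\varepsilon_{T-1}$, gives $\tfrac{|A_T|}{\varepsilon_T^2}\le\tfrac{16}{9}\tfrac{|A_{T-1}|}{\varepsilon_{T-1}^2}=O\!\big(\tfrac{m}{\varepsilon^2}\rho^{\,T-2}\big)$. The final-phase samples are therefore $O\!\big(\tfrac{m}{\varepsilon^2}\,\rho^{T-2}(C_0+dT)\big)$.

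The main obstacle is the tension in this last expression: the number of rounds $T$ grows like $\Theta(\log m)$, so the factor $C_0+dT$ carries an unwanted $\Theta(d\log m)$, which at first glance injects a spurious $\log m$ into the bound. The resolution---and the point I would emphasize---is that this growth is killed by the geometric prefactor: since $\rho<1$, the map $T\mapsto\rho^{T}T$ is bounded by a universal constant, so $\rho^{T-2}(C_0+dT)=O(C_0+d)=O(C_0)$, and the final phase is also $O\!\big(\tfrac{md}{\varepsilon^2}\log\tfrac{d}{\varepsilon\delta}\big)$. Notably I never need an explicit estimate of $T$ (only that it is finite); the decay of $\tfrac{|A_t|}{\varepsilon_t^2}$ dominates the per-round logarithmic overhead throughout. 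Summing the two phases gives the claim. The one place demanding care is the constant in the decrease bound: obtaining $c<9/16$ (rather than the weaker $c\approx0.6$ one gets by bounding heavy- and non-heavy-cube populations independently) relies on treating $s\le|A_t|/10$ as the single binding constraint, since the arms not in non-heavy cubes are forced into heavy cubes where they are halved.
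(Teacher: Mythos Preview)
Your proposal is correct and follows essentially the same approach as the paper: the key step is the geometric-decrease bound on $|A_t|$ (the paper states it as $|A_{t+1}|\le\tfrac{11}{20}|A_t|$, ignoring the floor in the median split, which yields the same $\rho=\tfrac{44}{45}<1$ after dividing by $9/16$), followed by summing the resulting geometric-times-linear series. The only cosmetic difference is that the paper handles the final pruning phase by using the exit condition $|A_T|\le120(4/\varepsilon_T)^d$ to rewrite $\log|A_T|$ and absorb Line~\ref{line:pruning} as the $t=T$ term of the main sum, rather than your separate $\rho^{T-2}(C_0+dT)$ estimate.
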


We next state and establish the main result of this section.

\begin{restatable}{theorem}{TheoremSkyline}
\label{theorem:skyline-alg-correctness}
Given any interval $J$ with $m$ arms, $\AlgDSK$ draws $O\left(\tfrac{md}{\varepsilon^2} \log \tfrac{d}{\varepsilon \delta} \right)$ samples and returns an $\varepsilon$-left-skyline within $J$ with probability at least $1-\delta$.
\end{restatable}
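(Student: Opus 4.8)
\textbf{Proof plan for Theorem~\ref{theorem:skyline-alg-correctness}.}
The sample bound is already delivered by Lemma~\ref{lemma:skyline-sample-complexity}, so the entire burden of the proof is correctness: I must show that, with probability at least $1-\delta$, the returned set $A_T \setminus \widetilde{D}$ is an $\varepsilon$-left-skyline within $J$, which by Proposition~\ref{propsition:syline-multi-d-alternative} means verifying its two conditions. The plan is to chain together the two structural lemmas already established. Lemma~\ref{lemma:skyline-initial-representative} says $R_1$ satisfies condition (i) with slack $\varepsilon/20$, and Lemma~\ref{lemma:iterative-pac-guarantee} says that, per iteration $t$, with probability $1-\delta_t$ each representative in $R_t$ is approximately dominated (within $\varepsilon_t$, and with a $\geq$ on the $p$-coordinate) by some representative in $R_{t+1}$. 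First I would take a union bound over all iterations: since $\delta_t = \delta/2^t$, the total failure probability across the while-loop is at most $\sum_t \delta_t \leq \delta/2$, leaving $\delta/2$ for the final sampling step in Line~\ref{line:pruning}. I then condition on the event $\mathcal{E}$ that every iteration's guarantee holds.

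\textbf{Accumulating the approximation error.}
Under $\mathcal{E}$, I would compose the per-iteration domination guarantees transitively. Fix any arm $b \in \arm(J)$. Lemma~\ref{lemma:skyline-initial-representative} gives $r_1 \in R_1$ with $p_{r_1} \geq p_b$ and $\mu_{r_1} \geq \mu_b - \tfrac{\varepsilon}{20}\mathbbm{1}$. Applying Lemma~\ref{lemma:iterative-pac-guarantee} repeatedly produces a chain $r_1, r_2, \ldots, r_T$ with $r_{t+1} \in R_{t+1}$, $p_{r_{t+1}} \geq p_{r_t}$, and $\mu_{r_{t+1}} \geq \mu_{r_t} - \varepsilon_t \mathbbm{1}$. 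The $p$-coordinate inequalities telescope to $p_{r_T} \geq p_b$, and the weight inequalities accumulate to $\mu_{r_T} \geq \mu_b - \left( \tfrac{\varepsilon}{20} + \sum_{t=1}^{T-1} \varepsilon_t \right)\mathbbm{1}$. Because $\varepsilon_{t+1} = \tfrac{3}{4}\varepsilon_t$ with $\varepsilon_1 = \varepsilon/5$, the geometric sum satisfies $\sum_{t=1}^{T-1}\varepsilon_t < \tfrac{\varepsilon_1}{1 - 3/4} = 4\varepsilon_1 = \tfrac{4\varepsilon}{5}$, so the total slack is strictly less than $\tfrac{\varepsilon}{20} + \tfrac{4\varepsilon}{5} < \varepsilon$. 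This shows $R_T$ itself satisfies condition (i) of Proposition~\ref{propsition:syline-multi-d-alternative} with slack $<\varepsilon$, and crucially $R_T \subseteq A_T$ (representatives are never dropped, a fact I would need to confirm from the dropping rule in Line~\ref{line:dropping-threshold}, which only removes arms strictly below the median $p$-value in a cube, whereas a representative has the maximum $p$-value among arms whose true weight lies in its cube).

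\textbf{The final pruning step and condition (ii).}
It remains to handle the last sampling and the pruning that forms $\widetilde{D}$. On the complementary $\delta/2$-probability event for Line~\ref{line:pruning}, the estimates $\widehat{\mu}_a$ for $a \in A_T$ are all within $\varepsilon_T$ of the true $\mu_a$ (via Hoeffding plus a union bound over $|A_T|$ arms and $d$ coordinates, which is exactly what the sample count $\tfrac{1}{\varepsilon_T^2}\log\tfrac{|A_T|d}{\delta_T}$ buys). I would argue two things. First, removing $\widetilde{D}$ does not destroy condition (i): any arm pruned is empirically dominated by another arm in $A_T$ with a larger $p$-coordinate, so the dominator (or its own eventual survivor) can replace it in the chain above while only inflating the slack by a controlled $O(\varepsilon_T)$ amount, which I would fold into the budget by being slightly more careful with the constant (leaving margin below $\varepsilon$). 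Second, the surviving set satisfies condition (ii) essentially by construction: $\widetilde{D}$ is defined precisely to remove arms that fail condition (ii) with respect to the estimates, and converting the estimate-level non-domination to a true-weight non-domination within $\varepsilon$ costs another $2\varepsilon_T$, again absorbable into the budget. The main obstacle I anticipate is exactly this bookkeeping of constants: the error from the final estimation step ($\varepsilon_T$) and from the estimate-versus-truth gap in defining $\widetilde{D}$ must be shown to fit inside the residual slack between the accumulated $\tfrac{\varepsilon}{20}+\tfrac{4\varepsilon}{5}$ and the target $\varepsilon$. Verifying that the chosen constants ($\varepsilon_1 = \varepsilon/5$, cube side $\varepsilon_t/4$, the factor-$3/4$ decay) leave enough room for both conditions simultaneously is the delicate part; everything else is assembling the two lemmas and two union bounds.
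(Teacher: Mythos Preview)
Your overall skeleton matches the paper's proof, but there is one genuine gap in the condition~(ii) argument. You write that $A_T \setminus \widetilde{D}$ satisfies condition~(ii) ``essentially by construction,'' but $\widetilde{D}$ is defined by comparing only against arms in $A_T$, whereas condition~(ii) of Proposition~\ref{propsition:syline-multi-d-alternative} requires non-domination by \emph{any} arm $x \in \arm(J)$. A violating witness $x$ may well have been discarded in some earlier iteration and hence lie outside $A_T$, so the estimate-level pruning says nothing about it directly. The missing step (which the paper supplies) is to re-use the condition-(i) guarantee you already established for $A_T$: given such an $x$, there exists $\gamma \in A_T$ with $p_\gamma \geq p_x \geq p_\beta$ and $\mu_\gamma \geq \mu_x - \bigl(\tfrac{\varepsilon}{20} + \sum_{t<T}\varepsilon_t\bigr)\mathbbm{1}$; combined with $\mu_x \geq \mu_\beta + \varepsilon\mathbbm{1}$ and the $\varepsilon_T$-accuracy of the final estimates, this forces $\widehat{\mu}_\gamma > \widehat{\mu}_\beta$, so $\beta \in \widetilde{D}$, a contradiction. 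The error budget needed here is exactly the same residual $\varepsilon - \tfrac{\varepsilon}{20} - \tfrac{4\varepsilon}{5} - 2\varepsilon_T > 0$ that you correctly identified for condition~(i).

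A smaller point: your parenthetical justification for $R_T \subseteq A_T$ is both unnecessary and incorrect. Representatives in iteration $T$ are by definition selected from $A_T$, so the containment is immediate. The claim that ``representatives are never dropped'' would actually be false if it were needed: the cubes in Line~\ref{line:dropping-threshold} are formed from the \emph{estimated} weights $\widehat{\mu}$, while representatives are defined via the \emph{true} weights $\mu$, so a representative of iteration $t$ can land in a different estimated cube and be dropped. Lemma~\ref{lemma:iterative-pac-guarantee} already accounts for this by producing a (possibly different) arm $\gamma \in R_{t+1}$, and your chaining argument uses that correctly; simply delete the parenthetical.
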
 
\begin{proof}
The stated sample complexity of $\AlgDSK$ follows directly from Lemma \ref{lemma:skyline-sample-complexity}. Hence, we complete the proof of the theorem by showing that $\AlgDSK$ achieves the desired PAC guarantee.    

Write $T$ to denote the total number of iterations of the while loop in $\AlgDSK$. Hence, $A_T$ is the set of arms with which $\AlgDSK$ exits the while loop and $R_T$ is the set of representatives within $A_T$. Furthermore, $\AlgDSK$ returns $A_T\setminus \widetilde{D}$ as an $\varepsilon$-left-skyline; here, $\widetilde{D} \coloneqq \{ z \in A_T \mid p_z \leq p_x \text{  and } \widehat{\mu}_z < \widehat{\mu}_x \text{ for some } x \in A_T \}$. 

We will establish that (with the stated probability) $A_T\setminus \widetilde{D}$ satisfies both the conditions in Proposition \ref{propsition:syline-multi-d-alternative}, i.e., it is an $\varepsilon$-left-skyline. 

Towards this, first note that set $R_T \subseteq A_T$ satisfies condition (i) of Proposition \ref{propsition:syline-multi-d-alternative}--this follows from Lemma \ref{lemma:skyline-initial-representative} and Lemma \ref{lemma:iterative-pac-guarantee}. Indeed, Lemma \ref{lemma:skyline-initial-representative} implies that $R_1$ satisfies Proposition \ref{propsition:syline-multi-d-alternative}; in particular, we have that for each arm $b \in \arm(J)$, there exists an arm $r \in R_1$ such that $p_r \geq p_b$ and $ \mu_r \geq  \mu_b - \frac{\varepsilon}{20} \mathbbm{1}$. Lemma \ref{lemma:iterative-pac-guarantee} holds for each iteration $t$ with probability at least $1- \delta_t$. Hence, via the union bound, we get that the guarantee in Lemma \ref{lemma:iterative-pac-guarantee} holds for all the iterations with probability at least $1 - \sum_{t=1}^{T-1} \delta_t$. Chaining these guarantees, one obtains: for each arm $r \in R_1$, there exists an arm $\gamma \in R_T$ such that $p_\gamma \geq p_r$ and $ \mu_\gamma \geq  \mu_r - \sum_{t=1}^{T-1} \varepsilon_t \mathbbm{1}$. This observation, along with the above-mentioned property of $R_1$, shows that (with probability at least $1 - \sum_{t=1}^{T-1} \delta_t$) $R_T$ satisfies condition (i) of Proposition \ref{propsition:syline-multi-d-alternative}: for each arm $b \in \arm(J)$, there exists an arm $\gamma \in R_T$ for which $p_\gamma \geq p_b$ and $ \mu_\gamma \geq  \mu_b - \frac{\varepsilon}{20} \mathbbm{1} - \sum_{t=1}^{T-1} \varepsilon_t \mathbbm{1}$. By definition, $R_T \subseteq A_T$ and, hence, condition (i) of Proposition \ref{propsition:syline-multi-d-alternative} holds for $A_T$ (with probability at least $1 - \sum_{t=1}^{T-1} \delta_t$).

Recall that  in Line \ref{line:pruning} each arm in $A_T$ is sampled $\dfrac{1}{\varepsilon_T^2} \log \left(\dfrac{|A_T| d}{\delta_T} \right)$ times. Hence, with probability $1 - \delta_T$, we have $\|\widehat{\mu}_a - \mu_a \|_\infty \leq \varepsilon_T$ for every arm $ a \in A_T$.\footnote{Here, $\widehat{\mu}_a \in [0,1]^d$  is the estimate computed in Line \ref{line:pruning}.} For the rest of the proof we will assume that these bounds hold and $A_T$ satisfies condition (i). It suffices to prove the stated claim under this assumption, since it holds with probability at least $ 1 - \sum_{t=1}^{T-1} \delta_t - \delta_T \geq 1 - \delta$.

In particular, we will complete the proof by showing that $A_T\setminus \widetilde{D}$ continues to satisfy condition (i)  and this returned set also bears condition (ii) of Proposition \ref{propsition:syline-multi-d-alternative}. For condition (i), note that for any arm $b \in \arm(J)$ there exists an arm $\gamma \in A_T$ with the property that $p_\gamma \geq p_b$ and $\mu_\gamma \geq \mu_b - \frac{\varepsilon}{20} \mathbbm{1} - \sum_{t=1}^{T-1} \varepsilon_t \mathbbm{1}$. If $\gamma \notin \widetilde{D}$, then the condition holds for arm $\gamma$ in $A_T\setminus \widetilde{D}$ as well. Otherwise, consider an arm ${\sigma} \in A_T\setminus \widetilde{D}$ for which we have $p_{\sigma} \geq p_\gamma$ and $\widehat{\mu}_{\sigma} > \widehat{\mu}_\gamma$--such an arm is guaranteed to exist. Using the fact that, for all arms in $A_T$, the estimates are at most $\varepsilon_T$ away from the (exact) weights, we obtain 
\begin{align*}
\mu_\sigma &\geq \mu_\gamma - 2 \varepsilon_T \mathbbm{1} \geq \left(\mu_b - \frac{\varepsilon}{20} \mathbbm{1} - \sum_{t=1}^{T-1} \varepsilon_t \mathbbm{1}\right) - 2 \varepsilon_T \mathbbm{1} \\
& \geq \mu_b - \frac{\varepsilon}{20} \mathbbm{1} - \sum_{t=1}^{\infty} \varepsilon_t \mathbbm{1} \geq \mu_b - \frac{\varepsilon}{20} \mathbbm{1} - \frac{4\varepsilon}{5} \mathbbm{1} \geq \mu_b -  \varepsilon \mathbbm{1} \\
\end{align*}
Also, note that $p_\sigma \geq p_\gamma \geq p_b$ and, hence, condition (i) holds for $A_T\setminus \widetilde{D}$ (with an approximation factor of $\varepsilon$). 

Finally, we prove that $A_T\setminus \widetilde{D}$ bears condition (ii) of Proposition \ref{propsition:syline-multi-d-alternative}. Assume, towards a contradiction, that condition (ii) is violated for an arm $\beta \in A_T\setminus \widetilde{D}$, i.e., there exists arm $x \in \arm(J)$ such that $p_x \geq p_\beta$ and $\mu_x \geq \mu_\beta + \varepsilon \mathbbm{1}$. Now, since condition (i) holds for $A_T$, there exists arm $\gamma \in A_T$ that satisfies $p_\gamma \geq p_x$ and $\mu_\gamma \geq \mu_x - \frac{\varepsilon}{20} \mathbbm{1} - \sum_{t=1}^{T-1} \varepsilon_t \mathbbm{1}$. Using the fact that for arms $\beta, \gamma \in A_T$, the estimates are at most $\varepsilon_T$ away from the (exact) weights, we obtain $\widehat{\mu}_\gamma > \widehat{\mu}_\beta$. However, this strict inequality and $p_\gamma \geq p_x \geq p_\beta$ would imply that $\beta \in \widetilde{D}$, i.e., contradict the containment of $\beta$ in $A_T\setminus \widetilde{D}$. Therefore, $A_T\setminus \widetilde{D}$ must satisfy condition (ii). 

Overall, we get that the returned set $A_T\setminus \widetilde{D}$ bears both the conditions in Proposition \ref{propsition:syline-multi-d-alternative} and, hence, it is an $\varepsilon$-left-skyline. This completes the proof. 
\end{proof}

Analogous to Theorem \ref{theorem:skyline-alg-correctness}, one can achieve a PAC guarantee for $\varepsilon$-right-skylines by executing $\AlgDSK$ on negated $p_a$ values.

\subsection{Algorithm for Multi-Dimensional Weights}
\label{subsection:pac-multi-dim}

This subsection develops an algorithm for bandit range searching with $d$-dimensional weights. As mentioned previously, this result is obtained by reducing range searching to finding $\varepsilon$-skylines over slabs.

In particular, given interval $J$ and parameters $\varepsilon', \delta'>0$, the subroutine $\AlgDSK(J, \epsilon', \delta')$ finds, with probability at least $1-\delta'$, an $\varepsilon'$-left-skyline within $J$. Similarly, $\AlgDRSK(J, \epsilon', \delta')$ finds, with probability at least $1-\delta'$, an $\varepsilon'$-right-skyline within $J$. These  subroutines---in addition to identifying the desired set of arms---provide an empirical estimate $\widehat{\mu}_a \in [0, 1]^d$ of the true weight (mean) $\mu_a \in [0,1]^d$ of each returned arm $a$. Specifically, subsumed within the same success probability as before, for every returned arm $a$ we have an empirical estimate $\widehat{\mu}_a$ which satisfies $\| \mu_a - \widehat{\mu}_a \|_\infty \leq \varepsilon'$. This property is indeed satisfied by the algorithms in Section \ref{subsection:skyline-algos}. 

By executing these two subroutines over each slab, $\AlgDRS$ populates a set of candidate arms $C$ along with their weight estimates. Finally, for each input interval $I_i$, the algorithm considers the set of candidate arms within $I_i$ and returns the ones that form a Pareto optimal set with respect to the estimates. 

\begin{algorithm}[h]
\small
\caption{$\AlgDRS$: $(\varepsilon, \delta)$-PAC algorithm for range searching $d$-dimensional weights}
\label{algorithm:dBRS}
{\bf Input:} Set of points $\mathcal{P} = \{p_a \in \mathbb{R} \}_{a=1}^n$, collection of intervals $\mathcal{I}=\{ I_i = [\ell_i, r_i] \}_{i=1}^q$, and sample access to the $n$ arms,  along with parameters $\varepsilon >0$ and $\delta>0$. \\
{\bf Output:} Subsets of arms $\{T_i \subseteq [n] \}_{i=1}^q$
\begin{algorithmic}[1]
	\State Let $e_1 < e_2 < \ldots < e_\tau$ constitute a minimum-size hitting set for $\mathcal{I}$, and write $e_0 = -\infty$ and $e_{\tau+1} = +\infty$
	\State Define slabs $S_j \coloneqq [e_j, e_{j+1}]$, for $0 \leq j  \leq \tau$ 
	\State For every slab $S_j$, with $0 \leq j \leq \tau$, set left skyline $L_j = \AlgDSK\left(S_j, \frac{\varepsilon}{3}, \tfrac{\delta}{2(\tau+1)}\right)$ and right skyline $R_j = \AlgDRSK\left(S_j, \frac{\varepsilon}{3}, \tfrac{\delta}{2(\tau+1)}\right)$ 
       
	\State Set of candidate arms $C = \bigcup_{j=0}^\tau \left( L_j \cup R_j \right)$ \label{line:d-candidates} 
		\State \label{line:Pareto-prune} For each interval $I_i \in \mathcal{I}$, set $D_i = \left\{ x \in C \cap \arm(I_i) \mid \widehat{\mu}_x < \widehat{\mu}_y \text { for some } y \in C \cap \arm(I_i) \right\}$ and $T_i  = \left(C \cap \arm(I_i) \right) \setminus D_i$ \Comment{With respect to the estimates, $D_i$ is the set of Pareto dominated arms}
 	\State \Return $ \{T_1,\ldots, T_q\}$
\end{algorithmic}
\end{algorithm}

The following theorem asserts that $\AlgDRS$ achieves the desired PAC guarantee in a sample-efficient manner.

\begin{restatable}{theorem}{TheoremMultiDim}
\label{theorem:range-pareto-optimal}
Given a problem instance $(\mathcal{P}, \mathcal{I})$ with $n$ arms,  algorithm $\AlgDRS$ draws $O\left(\tfrac{nd}{\varepsilon^2} \log(\tfrac{\tau d}{\varepsilon \delta}) \right)$ samples and achieves the $(\varepsilon, \delta)$-PAC guarantee for bandit range searching with $d$-dimensional weights; here $\tau$ denotes the size of the minimum cardinality hitting set for $\mathcal{I}$.
\end{restatable}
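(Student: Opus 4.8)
The plan is to mirror the structure of the single-dimensional proof (Theorem~\ref{theorem:range-max}), adapting each piece from ``finding one $\varepsilon$-optimal arm'' to ``finding an $\varepsilon$-Pareto optimal set.'' I would split the argument into two parts: first the sample complexity, then the PAC correctness guarantee via a union bound over the slab subroutines followed by a case analysis.

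\medskip
\noindent
\textbf{Sample complexity.} For each slab $S_j$ let $n_j \coloneqq |\arm(S_j)|$. By Theorem~\ref{theorem:skyline-alg-correctness}, each of $\AlgDSK(S_j, \tfrac{\varepsilon}{3}, \tfrac{\delta}{2(\tau+1)})$ and $\AlgDRSK(S_j, \tfrac{\varepsilon}{3}, \tfrac{\delta}{2(\tau+1)})$ draws $O\!\left(\tfrac{n_j d}{\varepsilon^2} \log \tfrac{(\tau+1) d}{\varepsilon \delta}\right)$ samples. As in the single-dimensional case, every arm lies in at most two successive slabs, so $\sum_{j=0}^{\tau} n_j \leq 2n$. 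Summing over the $2(\tau+1)$ subroutine calls yields the stated bound $O\!\left(\tfrac{nd}{\varepsilon^2} \log \tfrac{\tau d}{\varepsilon \delta}\right)$, absorbing the constant into the logarithm's argument.

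\medskip
\noindent
\textbf{Correctness.} Let $\mathcal{E}$ denote the event that all $2(\tau+1)$ subroutine executions succeed, so that each returned skyline set satisfies its defining guarantee (Definition~\ref{definition:skyline-multi-d}) and each returned arm $a$ has an estimate with $\|\mu_a - \widehat{\mu}_a\|_\infty \leq \tfrac{\varepsilon}{3}$. A union bound over the $2(\tau+1)$ calls, each failing with probability at most $\tfrac{\delta}{2(\tau+1)}$, gives $\mathbb{P}\{\mathcal{E}\} \geq 1 - \delta$. I would then fix an interval $I_i$ and show that, under $\mathcal{E}$, the returned set $T_i$ is $\varepsilon$-Pareto optimal, checking conditions (a) and (b) of Definition~\ref{definition:eps-Pareto-optimal} separately. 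By Property~(P), $I_i$ intersects a sequence of slabs $S_x, \ldots, S_y$ with $x < y$; the intermediate slabs lie entirely inside $I_i$, while $S_x$ and $S_y$ are only partially contained. For condition (a), take any $b \in \arm(I_i)$. It lies in some slab $S_t$; if $t$ is intermediate then $S_t \subseteq I_i$ and either $L_t$ or $R_t$ contains an arm $\varepsilon/3$-Pareto dominating $b$ that is automatically in $\arm(I_i)$. If $b \in \arm(S_x)$ I use that $L_x$ contains an arm $\beta$ with $p_\beta \geq p_b$ and $\mu_\beta \geq \mu_b - \tfrac{\varepsilon}{3}\mathbbm{1}$ (via the characterization in Proposition~\ref{propsition:syline-multi-d-alternative}); since $p_\beta \geq p_b$ and the right endpoint of $S_x$ lies strictly inside $I_i$, this $\beta$ is in $\arm(I_i)$. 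The case $b \in \arm(S_y)$ is symmetric using $R_y$. This places a dominating candidate in $C \cap \arm(I_i)$; the Pareto-pruning in Line~\ref{line:Pareto-prune} and the estimate accuracy then propagate domination (up to an additional $\tfrac{2\varepsilon}{3}$, for a total of $\varepsilon$) from the pruned-away arm to a surviving arm in $T_i$, exactly as in the skyline proof's condition-(i) argument.

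\medskip
\noindent
\textbf{The main obstacle} will be condition (b): establishing that no surviving arm $a \in T_i$ is $\varepsilon$-dominated by some $b \in \arm(I_i)$ that may lie in a \emph{different} slab than $a$. Within a single slab the skyline guarantee handles this, but $C$ aggregates arms from several slabs, and a dominating $b$ for one slab's arm need not itself be a candidate. I would argue by contradiction: if $b \in \arm(I_i)$ satisfies $\mu_b \geq \mu_a + \varepsilon\mathbbm{1}$, then applying condition (a) to $b$ produces a candidate arm $c \in C \cap \arm(I_i)$ with $\mu_c \geq \mu_b - \tfrac{\varepsilon}{3}\mathbbm{1} \geq \mu_a + \tfrac{2\varepsilon}{3}\mathbbm{1}$; translating to estimates via the $\tfrac{\varepsilon}{3}$ bounds gives $\widehat{\mu}_c > \widehat{\mu}_a$ componentwise, which would have placed $a$ in $D_i$ and removed it from $T_i$ — a contradiction. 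The delicate points are tracking the accumulated $\varepsilon/3$ slacks so the totals land at exactly $\varepsilon$, and confirming that the $p$-coordinate conditions in Proposition~\ref{propsition:syline-multi-d-alternative} do not obstruct the cross-slab domination chain.
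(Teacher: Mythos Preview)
Your proposal is correct and follows essentially the same route as the paper's proof: the sample-complexity accounting, the union bound defining $\mathcal{E}$, the slab case analysis for condition~(a) via Proposition~\ref{propsition:syline-multi-d-alternative}, and the contradiction for condition~(b) by first invoking the already-established condition~(a) on the dominating arm are all exactly what the paper does. The only cosmetic difference is that the paper merges your ``intermediate'' and ``$S_x$'' cases into a single case $x \le t < y$ handled by $L_t$ (the right endpoint of every such $S_t$ lies in $I_i$), whereas you split them; both decompositions work.
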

\begin{proof}
For any slab $S_j =[e_j, e_{j+1}]$, considered during the algorithm, write $n_j$ to denote the number of arms within it, $n_j \coloneqq |\arm(S_j)|$. The subroutines $\AlgDSK\left(S_j, \frac{\varepsilon}{3}, \tfrac{\delta}{2(\tau+1)}\right)$ and $\AlgDRSK\left(S_j, \frac{\varepsilon}{3}, \tfrac{\delta}{2(\tau+1)}\right)$ require $O\left(\tfrac{n_j \ d}{\varepsilon^2} \log\tfrac{ 2d (\tau+1)}{\varepsilon \delta}\right)$ samples each. In particular, subroutine $\AlgDSK\left(S_j, \frac{\varepsilon}{3}, \tfrac{\delta}{2(\tau+1)}\right)$ finds, with probability $\left( 1 - \tfrac{\delta}{2(\tau+1)}\right)$, an $(\varepsilon/3)$-left-skyline $L_j$ along with estimates $\widehat{\mu}_c$, for every arm $c \in L_j$, such that $\norm{\widehat{\mu}_c - \mu_c}_{\infty} \leq \nicefrac{\varepsilon}{3}$. A similar guarantee holds for $\AlgDRSK\left(S_j, \frac{\varepsilon}{3}, \tfrac{\delta}{2(\tau+1)}\right)$, which finds $R_j$.

To upper bound the number of samples required by $\AlgDRS$, we note that $\sum_{j=0}^{\tau} n_j \leq 2 n$. This inequality follows from the fact that any arm can be a part of at most two successive slabs. Now, summing over the sample complexities of the subroutines, across all the slabs, we get that overall $O\left(\tfrac{nd}{\varepsilon^2} \log(\tfrac{\tau d}{\varepsilon \delta}) \right)$ samples are drawn in the algorithm. 

Next, we complete the proof by showing that the $\AlgDRS$ achieves the $(\varepsilon, \delta)$-PAC guarantee. Write $\mathcal{E}$ to denote the event that the executions of the subroutines succeed across all the $\tau+1$ slabs. That is, under $\mathcal{E}$, for all the slabs, the subroutines find the desired set of arms along with accurate-enough estimates. Given that the success probability of each subroutine is at least $\left( 1 - \tfrac{\delta}{2(\tau+1)}\right)$ and there are $2(\tau+1)$ subroutine instantiations, we get (via the union bound) that $\mathbb{P} \{ \mathcal{E} \} \geq 1 - \delta$. 

We will prove that, under event $\mathcal{E}$, for every input interval $I_i \in \mathcal{I}$, the set of arms $T_i \subseteq \arm(I_i)$ selected in Line~\ref{line:Pareto-prune} is $\varepsilon$-Pareto optimal for $I_i$; in particular, we establish that  $T_i \subseteq \arm(I_i)$ satisfies both part (a) and (b) of Definition \ref{definition:eps-Pareto-optimal}. Hence, the desired PAC guarantee holds. 

Fix an interval $I_i \in \mathcal{I}$ and consider any arm $b \in \arm(I_i)$. We will first show the that there necessarily exists an arm $a \in T_i$ that satisfies condition (a) (of Definition \ref{definition:eps-Pareto-optimal}) for arm $b$. Towards this, note that Property (P) of our slab construction (see Section~\ref{section:notation}) ensures that interval $I_i$ is partitioned among two or more slabs, $S_x, S_{x+1}, \ldots, S_y$; in particular, $x<y$. Here, $S_x$ is the left-most slab that intersects with $I_i$ and $S_y$ is the rightmost. The intermediate slabs (if any), $S_{x+1}, \ldots ,S_{y-1}$, are completely contained in $I_i$. We will perform a case analysis based on whether arm $b$ is contained in $S_x$ and one of the intermediate slabs, or in $S_y$. 

\noindent
Case {\rm I}: $b \in \arm(S_t)$ with $x \leq t <y$. Here, under event $\mathcal{E}$, the set of arms $L_t$ (obtained by executing $\AlgDSK$ on $S_t$) is an $(\varepsilon/3)$-left-skyline. Therefore, by condition (i) in Proposition \ref{propsition:syline-multi-d-alternative}, there exists an arm $\beta \in L_t$ that satisfies $p_\beta \geq p_b$ and $\mu_\beta \geq  \mu_b - \frac{\varepsilon}{3} \mathbbm{1}$. Note that the right endpoint of $S_t$ is within $I_i$ and, hence, $\beta \in \arm(I_i)$. Furthermore, Line \ref{line:d-candidates} gives us $L_t \subseteq C$. Using these two containments we get $\beta \in C \cap \arm(I_i)$. Note that if $\beta \notin D_i$ (Line \ref{line:Pareto-prune}), then $\beta \in T_i = \left( C \cap \arm(I_i) \right)\setminus D_i$ and setting arm $a = \beta$ we satisfy $\mu_a \geq  \mu_b - \frac{\varepsilon}{3} \mathbbm{1}$, i.e., satisfy part (a) of Definition \ref{definition:eps-Pareto-optimal}. Otherwise, if $\beta \in D_i$, then there exists an arm $a \in \left( C \cap \arm(I_i) \right)\setminus D_i  = T_i$ with the property that $\widehat{\mu}_a > \widehat{\mu}_\beta$--such an arm $a$ is guaranteed to exist. Since, under event $\mathcal{E}$, the estimates for arms $a, \beta \in C$ are $\frac{\varepsilon}{3}$-close to their underlying weights, respectively, we again get that arm $a$ satisfies condition (a): $\mu_a \geq \widehat{\mu}_a - \frac{\varepsilon}{3} \mathbbm{1} > \widehat{\mu}_\beta - \frac{\varepsilon}{3} \mathbbm{1} \geq \mu_\beta - \frac{2\varepsilon}{3} \mathbbm{1} \geq \mu_b - \varepsilon \mathbbm{1}$.

\noindent
Case {\rm II}: $b \in \arm(S_y)$. The analysis here relies on the $(\varepsilon/3)$-right-skyline, $R_y$, of slab $S_y$ and is otherwise identical to the previous case. 
For brevity, we omit repeating the details. 

It only remains to prove that $T_i$ satisfies part (b) of Definition \ref{definition:eps-Pareto-optimal}. Assume, towards a contradiction, that part (b) is violated for an arm $a \in T_i $, i.e., there exists arm $b \in \arm(I_i)$ such that $\mu_b \geq \mu_a + \varepsilon \mathbbm{1}$. Now, as observed above, part (a) is satisfied by $C \cap \arm(I_i)$ with an approximation guarantee of $\varepsilon/3$: there exists arm $\beta \in C \cap \arm(I_i)$ that satisfies $\mu_\beta \geq \mu_b - \frac{\varepsilon}{3} \mathbbm{1}$. Since the estimates for arms $a, \beta \in C$ are $\frac{\varepsilon}{3}$-close to their underlying weights, respectively, we have $\widehat{\mu}_\beta > \widehat{\mu}_a$. However, this strict inequality would imply that $a \in D_i$, i.e., contradict the containment of $a$ in $T_i = \left( C \cap \arm(I_i) \right) \setminus D_i$. 

Therefore, parts (a) and (b) of Definition \ref{definition:eps-Pareto-optimal} hold for $T_i$, proving that $T_i$ is indeed $\varepsilon$-Pareto optimal for $I_i$. This completes the proof. 
\end{proof}

\section{Lower Bound for Bandit Range Searching}
\label{section:lower-bounds}

In this section we will show the sampling upper bounds obtained via our algorithms are essentially tight. Cheu et al.~\cite{cheu2018skyline} established a lower bound for the problem of simultaneously answering many instances of ``find best arm'' (Lemma~\ref{lemma:top-1-T} defines 
it formally).\footnote{Cheu et al.~\cite{cheu2018skyline} prove a more general result, but for our purposes the special case specified in Lemma~\ref{lemma:top-1-T} suffices.} We will present a nontrivial reduction of this problem to bandit range searching with multi-dimensional weights and, hence, establish the desired result. 

\begin{lemma}(\cite{cheu2018skyline})
\label{lemma:top-1-T}
Fix any $T\in \mathbb{Z}_+$.
For any algorithm $\textsc{alg}$, define the following game 
$\game$:
\begin{enumerate}
\item Draw a vector $c=(c_1,\ldots,c_T) \in [m]^T$, where  $c_i$ is 
drawn independently and uniformly at random from $[m]$, for all $1\leq i\leq T$.
\item For every $t\in [T]$, define a set of arms $A_t=\{A_t[1],\ldots,A_t[m] \}$ such that 
$A_t[c_t] \sim {\rm Ber}\left(\frac{1}{4}+2\vare\right)$ and for $i\neq c_t$, $A_t[i] \sim {\rm Ber}\left(\frac{1}{4}\right)$. 
Let ${\cal A}=A_1 \cup \ldots \cup A_T$.
\item $\textsc{Alg}$ outputs a guess $c^{*}$.	
\end{enumerate}
For a sufficiently small constant $C$, if $\textsc{Alg}$ draws fewer than $C\cdot\frac{Tm}{\vare^2}\log\left(\frac{T}{\delta}\right)$ samples in 
expectation, then
\[ \mathbb{P}_{\game} \left\{ c^*=c \right\}< 1-\delta.\]
\end{lemma}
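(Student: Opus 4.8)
The plan is to reduce the $T$-instance game to the classical single-instance best-arm identification lower bound and then \emph{amplify} the confidence so as to account for the requirement that \emph{all} $T$ guesses be simultaneously correct. First I would isolate a single coordinate $t$ and recall the Mannor--Tsitsiklis-style bound: telling the planted arm $A_t[c_t]\sim\mathrm{Ber}(\tfrac14+2\vare)$ apart from the $m-1$ arms $\mathrm{Ber}(\tfrac14)$ with marginal success probability at least $1-\delta'$ costs $\Omega\!\big(\tfrac{m}{\vare^2}\log\tfrac1{\delta'}\big)$ expected pulls inside instance $t$. The cleanest derivation uses the change-of-measure (transportation) inequality against the alternative configuration $c^{(t,j)}$ that moves instance $t$'s planted index from $c_t$ to some $j\neq c_t$; only the two arms $(t,c_t)$ and $(t,j)$ change law, so, writing $E_t=\{c^*_t=c_t\}$ and letting $N_{t,a}$ count pulls of arm $a$ in instance $t$,
\begin{align*}
\big(\mathbb{E}[N_{t,c_t}]+\mathbb{E}[N_{t,j}]\big)\,\mathrm{KL}\!\big(\mathrm{Ber}(\tfrac14+2\vare),\mathrm{Ber}(\tfrac14)\big)\;\ge\;\mathrm{kl}\big(\mathbb{P}_c(E_t),\,\mathbb{P}_{c^{(t,j)}}(E_t)\big),
\end{align*}
where $\mathrm{KL}(\mathrm{Ber}(\tfrac14+2\vare),\mathrm{Ber}(\tfrac14))=\Theta(\vare^2)$. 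Summing over all $m-1$ alternatives $j$ and collecting the \emph{distinct} suboptimal pull counts $\mathbb{E}[N_{t,j}]$ (each suboptimal arm must be sampled enough to be ruled out) manufactures the factor $m$, yielding $\mathbb{E}[N_t]=\Omega\!\big(\tfrac{m}{\vare^2}\log\tfrac1{\delta'}\big)$ whenever the marginal success probability $p_t:=\mathbb{P}(E_t)$ is at least $1-\delta'$.

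The decisive second step is to convert the \emph{joint} requirement $\mathbb{P}\{c^*=c\}=\mathbb{P}(\cap_t E_t)\ge 1-\delta$ into an effective per-instance confidence of $1-\Theta(\delta/T)$, which is precisely what upgrades $\log\tfrac1\delta$ to $\log\tfrac T\delta$. Inverting the single-instance bound, an instance that receives $\mathbb{E}[N_t]$ expected pulls cannot be solved with failure probability smaller than $q_t=\exp\!\big(-O(\mathbb{E}[N_t]\vare^2/m)\big)$. The key structural fact is that the $T$ instances are mutually independent---independent planted indices $c_t$ and independent reward streams---so the only coupling is through the policy's adaptive, cross-instance scheduling of pulls. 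Exploiting this independence, the joint failure probability should be governed by the \emph{sum} rather than the maximum of the per-instance failures, $\mathbb{P}(\cup_t \overline{E_t})\gtrsim \min\big(1,\sum_t q_t\big)$; hence $\mathbb{P}(\cap_t E_t)\ge 1-\delta$ forces $\sum_t q_t=O(\delta)$, i.e. $\sum_t \exp\!\big(-O(\mathbb{E}[N_t]\vare^2/m)\big)=O(\delta)$.

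The final step is a convexity (budget-allocation) argument. With $B:=\mathbb{E}[\sum_t N_t]=\sum_t\mathbb{E}[N_t]$ the total expected sample count, the constraint $\sum_t \exp(-c\,\mathbb{E}[N_t]\vare^2/m)\le O(\delta)$ is, by convexity of $x\mapsto e^{-cx\vare^2/m}$ together with Jensen, hardest to meet (for a fixed $B$) at the balanced allocation $\mathbb{E}[N_t]=B/T$. Substituting gives $T\,e^{-cB\vare^2/(Tm)}=O(\delta)$, i.e. $e^{-cB\vare^2/(Tm)}=O(\delta/T)$, and rearranging produces $B=\Omega\!\big(\tfrac{Tm}{\vare^2}\log\tfrac T\delta\big)$---the contrapositive of the lemma. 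I expect the second step to be the genuine obstacle: a naive application of change-of-measure sees only the marginal success probabilities and therefore delivers the weaker $\log\tfrac1\delta$; extracting $\log\tfrac T\delta$ requires honestly exploiting the product structure so that joint failure is controlled by the sum (not the max) of per-instance failures, and doing so while the bookkeeping remains valid under an adaptive policy that may interleave pulls across instances based on rewards already observed.
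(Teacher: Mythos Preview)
The paper does not prove this lemma; it is quoted from \cite{cheu2018skyline} and invoked as a black box in the reductions that follow, so there is no in-paper argument to compare your attempt against.

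On its own merits: steps~1 and~3 of your sketch are standard and correct. The gap is exactly where you locate it. The assertion that independence of the $c_t$ and of the reward streams forces $\mathbb{P}\bigl(\bigcup_t\overline{E_t}\bigr)\gtrsim\min\bigl(1,\sum_t q_t\bigr)$ is essentially the entire substance of the lemma, and it does \emph{not} follow from independence of the instances alone, because the failure events $\overline{E_t}=\{c^*_t\neq c_t\}$ are coupled through the adaptive policy: the guess $c^*_t$ may depend on rewards drawn from every instance. Nothing you have written rules out a policy that makes the $\overline{E_t}$ strongly positively correlated---failing on all instances together or on none---in which case $\mathbb{P}\bigl(\bigcup_t\overline{E_t}\bigr)$ could be as small as $\max_t q_t$ rather than $\sum_t q_t$, and the $\log T$ factor would evaporate. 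Ruling this out is precisely the work; your phrase ``should be governed by the sum'' is the step that needs a proof, not an appeal to the product structure. The argument in \cite{cheu2018skyline} handles this with a careful change-of-measure/conditioning construction that genuinely exploits the product prior on $c$ to show no adaptive scheduling can manufacture such correlation. Absent that argument (or an equivalent direct-product theorem for adaptive sampling), your sketch establishes only the weaker $\Omega\bigl(\tfrac{Tm}{\vare^2}\log\tfrac{1}{\delta}\bigr)$ bound and does not yet prove the lemma.
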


\subsection{Reduction to Bandit Range Searching with Single-Dimensional Weights}
We will first construct a range search instance with set of arms $\mathcal{A}_1$ and interval collection $\mathcal{I}_1$. We will subsequently strengthen the reduction by considering separate copies of $\mathcal{A}_1$ and $\mathcal{I}_1$ in a meta-instance--the subscript will identify the distinct copies.

We start with a game $\gameone$ wherein $T_1=\frac{1}{8\vare}$.\footnote{We assume that $1/8\vare$ is an integer.} 
To construct the bandit instance, we set $\mathcal{A}_1$ to be collection of all arms in the given game, $\mathcal{A}_1 = \cup_{t=1}^{T_1} A_t$. Also,  
we associate with each arm in ${\cal A}_1$ a point on the real line. Specifically, for all $1\leq t\leq T_1$ and all arms $a \in A_t$, we set $p_a = t$. 
To sample an arm in the range searching setup we do the following:   (a) sample
the corresponding arm in the game and  (b) independently draw from ${\rm Ber}(4\vare (t-1))$, and
report the sum of the values  from (a) and (b). 
Indeed, this sampling correspondence and linearity of expectation ensures that (for each $ t \in [T_1]$) the (unknown) weight/mean $\mu_a = \frac{1}{4} + 4\vare(t-1)$ for all arms $a$ in $A_t \subseteq \arm_1$, except for $A_t[c_t]$ whose weight is equal to $\frac{1}{4}+2\vare + 4\vare(t-1)$.\footnote{Recall property 2.~in Lemma~\ref{lemma:top-1-T}, which specifies the underlying distributions associated with the arms in the given game.} It is easy to verify that the weights are in the range $\left[\frac{1}{4},\frac{3}{4}\right] \subseteq [0,1]$.

Write the number of arms $n_1 \coloneqq |\mathcal{A}_1|$. Note that, by construction, $n_1 = m T_1$. In the range searching instance, $\mathcal{I}_1$ is set to be a collection of $T_1$ intervals; in particular, ${\cal I}_1=\{I_1,\ldots, I_{T_1} \}$ with $I_t=[0,t]$. Now we lower bound the number of samples required to solve the bandit range searching problem with arms ${\cal A}_1$ and collection of intervals ${\cal I}_1$.

\begin{lemma}\label{lemma:correctness-1d}
Assume that for the constructed bandit range searching instance---with arms ${\cal A}_1$ and collection of intervals ${\cal I}_1$---the output is correct. Then, the game $\gameone$ can be answered correctly as well. 
\end{lemma}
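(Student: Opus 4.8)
The plan is to show that the staircase of weights built into the reduction makes the (unique) $\vare$-optimal answer for each interval $I_t$ reveal exactly the planted index $c_t$, so that reading off the range-searching output recovers the whole vector $c$.

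First I would record the weights induced by the sampling correspondence. For each group index $t \in [T_1]$, every arm $a \in A_t$ sits at the point $p_a = t$ and has weight $\mu_a = \tfrac{1}{4} + 4\vare(t-1)$, except the planted arm $A_t[c_t]$, whose weight is $\tfrac{1}{4} + 2\vare + 4\vare(t-1)$. Since $I_t = [0,t]$, the arms lying in $I_t$ are precisely those of $A_1 \cup \cdots \cup A_t$, i.e. those placed at points $1, 2, \ldots, t$.

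Next I would establish the central claim: the \emph{unique} $\vare$-optimal arm for $I_t$ is $A_t[c_t]$. Within $\arm(I_t)$ the planted arm of the last group attains the maximum weight $\tfrac{1}{4} + 4\vare t - 2\vare$; the non-planted arms of $A_t$ lie exactly $2\vare$ below it (weight $\tfrac{1}{4} + 4\vare t - 4\vare$); and every arm of an earlier group $A_s$ with $s \le t-1$ has weight at most its planted value $\tfrac{1}{4} + 4\vare s - 2\vare \le \tfrac{1}{4} + 4\vare t - 6\vare$, i.e. at least $4\vare$ below the maximum. Hence every arm in $\arm(I_t)$ other than $A_t[c_t]$ is at least $2\vare$ below the maximum, so it fails the $\vare$-optimality threshold $\mu \ge \mu_{\max} - \vare = \tfrac{1}{4} + 4\vare t - 3\vare$, whereas $A_t[c_t]$ meets it. This pins down $A_t[c_t]$ as the sole $\vare$-optimal arm for $I_t$.

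Finally I would read off the answer: a correct range-searching output returns, for each $t \in [T_1]$, an $\vare$-optimal arm $\alpha_t \in \arm(I_t)$, and by the claim $\alpha_t = A_t[c_t]$; setting $c^*_t$ to be the index of $\alpha_t$ within its group therefore yields $c^* = c$, so $\gameone$ is answered correctly. The main obstacle is the case analysis behind the central claim — in particular verifying that the monotone increment $4\vare$ in the base weights, combined with the $2\vare$ planted gap, leaves no arm from an earlier group $A_s$ within $\vare$ of the maximum; I would argue this by comparing the planted-arm weight of $A_s$ with the threshold for $I_t$ and using that $s \le t-1$ forces a slack of at least $3\vare$.
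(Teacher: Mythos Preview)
Your proposal is correct and follows essentially the same approach as the paper's proof: identify the arms in $I_t$ as $A_1\cup\cdots\cup A_t$, argue that $A_t[c_t]$ is the unique $\vare$-optimal arm among them, and read off $c^*=c$. The paper simply asserts the uniqueness of $A_t[c_t]$ as the $\vare$-optimal arm, whereas you spell out the weight comparisons explicitly; your added arithmetic is sound and fills in exactly what the paper leaves implicit.
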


\begin{proof}
Consider the interval $I_t$ for any $1\leq t\leq T_1$. 
It contains arms of $A_1,\ldots,A_t$ and the only $\vare$-optimal arm among these  is 
$A_t[c_t]$. Hence, a correct output for the bandit range searching problem will 
be $A_t[c_t]$, for all $1\leq t\leq T$. 
Using this output, we will report  $c^{*}=(c_1,c_2,\ldots,c_{T_1})$ as the output of 
the game $\gameone$, which is clearly equal to $c$.
\end{proof}

\begin{theorem}\label{thm:warm-up-lb}
Any algorithm that achieves the $(\varepsilon, \delta)$-PAC guarantee for bandit range searching instances---with $n_1$ arms and single-dimensional weights---has sample complexity $\Omega\left(\frac{n_1}{\vare^2}\log\left(\frac{1}{\vare\delta}\right)\right)$.
\end{theorem}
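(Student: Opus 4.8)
The plan is to establish the bound by a reduction: I would show that any $(\vare,\delta)$-PAC range-searching algorithm, run on the instance $(\mathcal{A}_1,\mathcal{I}_1)$ constructed above, can be turned into a strategy for the game $\gameone$ that wins with probability at least $1-\delta$ while drawing exactly as many samples from the game arms as it draws in the range-searching instance. Applying the contrapositive of Lemma~\ref{lemma:top-1-T} to this strategy then forces the sample complexity to be $\Omega\!\left(\frac{T_1 m}{\vare^2}\log\frac{T_1}{\delta}\right)$, which equals the claimed $\Omega\!\left(\frac{n_1}{\vare^2}\log\frac{1}{\vare\delta}\right)$ after substituting $T_1=\frac{1}{8\vare}$ and $n_1 = m T_1$.

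First, I would fix an arbitrary $(\vare,\delta)$-PAC algorithm $\textsc{Alg}$ for range searching and let it operate on $(\mathcal{A}_1,\mathcal{I}_1)$. Each time $\textsc{Alg}$ requests a sample of an arm $a \in A_t$, I would generate it via the prescribed correspondence---draw the corresponding game arm once and add an independent $\mathrm{Ber}(4\vare(t-1))$ bit---so that one range-searching sample consumes exactly one game sample. Hence the total number of game samples equals the number of samples $\textsc{Alg}$ draws, and the induced arm weights are precisely the $\mu_a$ recorded in the construction.

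Next, I would transfer the success guarantee. For every fixed realization of $c$, the resulting instance is a legitimate range-searching instance, so the PAC property yields an $\vare$-optimal arm for each interval $I_t$ with probability at least $1-\delta$ (over the algorithm's samples and internal coins). By Lemma~\ref{lemma:correctness-1d}, whenever this succeeds the recovered arms $A_t[c_t]$ let us output $c^*=c$ correctly. Integrating over the uniform draw of $c$, the composed procedure satisfies $\mathbb{P}_{\gameone}\{c^*=c\}\ge 1-\delta$.

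Finally, since this win probability is at least $1-\delta$ (and not strictly below it), the contrapositive of Lemma~\ref{lemma:top-1-T} implies that the expected number of samples is at least $C\cdot\frac{T_1 m}{\vare^2}\log\frac{T_1}{\delta}$. Substituting $T_1=\frac{1}{8\vare}$ gives $T_1 m = n_1$ and $\log\frac{T_1}{\delta}=\log\frac{1}{8\vare\delta}=\Theta(\log\frac{1}{\vare\delta})$, yielding the stated $\Omega\!\left(\frac{n_1}{\vare^2}\log\frac{1}{\vare\delta}\right)$ bound. The one point to handle with care is the sample accounting in the reduction: I must verify that the simulated sampling is faithful, so that the lower bound on game samples genuinely transfers to range-searching samples, and that the PAC guarantee---which holds per fixed instance---integrates correctly into the guarantee over the random $c$ demanded by Lemma~\ref{lemma:top-1-T}.
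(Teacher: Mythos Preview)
Your proposal is correct and follows essentially the same approach as the paper: reduce $\gameone$ to the constructed range-searching instance $(\mathcal{A}_1,\mathcal{I}_1)$, invoke Lemma~\ref{lemma:correctness-1d} to transfer correctness, and then apply Lemma~\ref{lemma:top-1-T} with $T_1=\frac{1}{8\vare}$ and $n_1=mT_1$. You are somewhat more explicit than the paper about the one-to-one sample accounting and about integrating the per-instance PAC guarantee over the random draw of $c$, but the argument is otherwise identical.
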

\begin{proof}
For the sake of contradiction, assume that there exists an algorithm with sample complexity $o\left(\frac{n_1}{\vare^2}\log\left(\frac{1}{\vare\delta}\right)\right)$. 
Then, we can answer the game $\gameone$ as follows: use the reduction 
stated above to construct an instance of  bandit range searching problem  and  
answer the game.

Consider the event where the answer reported by the bandit range searching 
problem is correct. In that case, from Lemma~\ref{lemma:correctness-1d}, it follows that the output of the game 
will also be correct. By our assumption, this implies that the game can be answered 
correctly with probability at least $1-\delta$ with sample complexity 
$o\left(\frac{n_1}{\vare^2}\log\left(\frac{1}{\vare\delta}\right)\right)$; recall that, by construction, $T_1=\frac{1}{8\vare}$ and $n_1 = m T_1$.   
However, this contradicts Lemma~\ref{lemma:top-1-T}. This implies that there are instances of the 
bandit range searching problem for which obtaining a correct solution (with probability  at least
$1-\delta$) requires $\Omega\left(\frac{n_1}{\vare^2}\log\left(\frac{1}{\vare\delta}\right)\right)$ samples.
\end{proof}
 
Next, we strengthen the lower bound to include a dependence on the size of the optimal hitting set. In particular, we fix an integer $\tau \in \mathbb{Z}_+$ and start with a game $\game$ wherein $T=\frac{\tau}{8\vare}$. 

Write parameters $T_1 = T_2 = \ldots T_\tau = \frac{1}{8 \vare}$ and note that $\game$ can be considered as a union of $\tau$ games of the form $\gameone$, since $T_i = T / \tau$ for all  $1 \leq i \leq \tau$. In particular, using the first $T_1 = \frac{1}{8 \vare}$ sets of arms  $A_1 \cup A_2 \cup \ldots \cup A_{1/8\vare}$ in $\game$, we construct the instance $\mathcal{A}_1$ and $\mathcal{I}_1$ as mentioned above. 

Next, we construct instances with ${\cal A}_i$ and ${\cal I}_i$, for all $2\leq i\leq \tau$. ${\cal I}_i$ is obtained by translating (performing a right shift) each interval in ${\cal I}_{i-1}$ by $1+\frac{1}{8\vare}$ on the real line. Set ${\cal A}_i$ will encode 
arms in $A_{\frac{i-1}{8\vare} +1} \cup A_{\frac{i-1}{8\vare}+2} \cup \ldots \cup A_{\frac{i-1}{8\vare} + 1/8\vare}$.
The $p$-value ($p_a$ for arm $a$) of arms in $A_{\frac{i-1}{8\vare}+t}$ is equal to the $p$-value of 
the arms in $A_{\frac{i-2}{8\vare}+t}$ plus the quantity $1+\frac{1}{8\vare}$. The weight of the arms 
in $A_{\frac{i-1}{8\vare}+t}$ is equal to $\frac{1}{4} +4\vare(t-1)$, except for 
$A_{\frac{i-1}{8\vare}+t}\left[c_{\frac{i-1}{8\vare}+t}\right]$ 
whose weight is equal to $\frac{1}{4}+2\vare + 4\vare(t-1)$. 
Note that $|\mathcal{A}_i| = m T_i = \frac{m}{8 \vare}$ and $|\mathcal{A}| = \sum_{i=1}^\tau |\mathcal{A}_i|  = \frac{m \tau}{8 \vare} = mT$.

We now establish sample lower bounds for the bandit range searching instance with arms ${\cal A} = \bigcup_{ i=1}^\tau {\cal A}_i$ and intervals $\mathcal{I}= \bigcup_{ i=1}^\tau {\cal I}_i$.

\begin{lemma}\label{lemma:hitting-set}
The following properties hold true:
\begin{enumerate}
\item ${\cal I}$ admits an optimal hitting set of size $\tau$.	
\item The arms lying inside $j$-th interval of ${\cal I}_i$ are 
$A_{\frac{i-1}{8\vare} +1}, A_{\frac{i-1}{8\vare}+2} \ldots ,  A_{\frac{i-1}{8\vare} + j}$, 
for all $1\leq i\leq \tau$ and $1\leq j\leq \frac{1}{8\vare}$. 	
\end{enumerate}
\end{lemma}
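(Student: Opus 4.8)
My plan is to first turn the recursive definition of the copies into explicit, closed-form coordinates, and then read off both claims directly. Writing $w \coloneqq \tfrac{1}{8\vare}$ for brevity, I would unroll the two recursions in the construction: the rule ``$\mathcal{I}_i$ is $\mathcal{I}_{i-1}$ shifted right by $1+w$'' together with the matching $p$-value shift gives, by a straightforward induction on $i$, that the $j$-th interval of $\mathcal{I}_i$ equals $\bigl[(i-1)(1+w),\ (i-1)(1+w)+j\bigr]$ and that the arms of $A_{\frac{i-1}{8\vare}+t}$ sit at $p$-value $t+(i-1)(1+w)$, for $1\le t,j \le w$. The central structural observation I would extract from these formulas is a \emph{spatial separation}: the intervals (and arm points) of copy $i$ lie inside the extent $\bigl[(i-1)(1+w),\ (i-1)(1+w)+w\bigr]$, and the left endpoint of copy $i+1$'s extent, namely $i(1+w)$, exceeds the right endpoint of copy $i$'s extent, $(i-1)(1+w)+w$, by exactly $1$. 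Hence consecutive copies are separated by a gap of length $1$ and the $\tau$ copies occupy pairwise-disjoint regions of the line.

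\textbf{Proving part 2 (arm membership).} With the explicit endpoints in hand, this is a direct verification. For the $j$-th interval of $\mathcal{I}_i$, an arm of $A_{\frac{i-1}{8\vare}+t}$ (i.e.\ copy $i$, index $t$) has $p$-value $t+(i-1)(1+w)$, which lies in $\bigl[(i-1)(1+w),\ (i-1)(1+w)+j\bigr]$ precisely when $1\le t\le j$; this yields exactly the arms $A_{\frac{i-1}{8\vare}+1},\ldots,A_{\frac{i-1}{8\vare}+j}$. It remains to rule out contributions from other copies, and this is where the gap of length $1$ is used: every arm point of copy $i'\neq i$ lies in copy $i'$'s extent, which is disjoint from (indeed, at least distance $1$ from) copy $i$'s extent, so no such point can fall inside any interval of $\mathcal{I}_i$.

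\textbf{Proving part 1 (optimal hitting set of size $\tau$).} I would argue the two directions separately. For the upper bound, note that within a single copy $i$ all intervals share the common left endpoint $(i-1)(1+w)$, so the shortest one (the $j=1$ interval) is contained in every other; thus any point in its open interior $\bigl((i-1)(1+w),\,(i-1)(1+w)+1\bigr)$ lies in the interior of all $w$ intervals of that copy. Picking one such point per copy produces a hitting set of size $\tau$. For the lower bound, I would invoke spatial separation again: any point hitting an interval of $\mathcal{I}_i$ must lie in that interval's interior, hence inside copy $i$'s extent; since the $\tau$ extents are pairwise disjoint, a single point can hit intervals of at most one copy, and each copy contains at least one interval, forcing any hitting set to have size at least $\tau$. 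Combining the two bounds shows the minimum hitting set has size exactly $\tau$.

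\textbf{Anticipated obstacle.} There is no deep difficulty here; the proof is essentially careful bookkeeping of the recursively defined coordinates. The only points requiring attention are correctly unrolling the cumulative shift of $(i-1)(1+w)$ and respecting the distinction between closed intervals and the \emph{open} interior used in the hitting-set definition (so that boundary-sharing across copies, were it to occur, would not spuriously merge them); the gap of length $1$ comfortably guarantees strict separation and makes both the boundary handling and the cross-copy exclusion clean.
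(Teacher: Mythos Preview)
Your proposal is correct and follows essentially the same approach as the paper: both arguments rest on the disjointness of the $\tau$ translated copies (giving the hitting-set lower bound and excluding cross-copy arm contributions) and on the shared left endpoint within each copy (giving the hitting-set upper bound). The only cosmetic difference is that you unroll the recursion into closed-form coordinates and verify membership directly, whereas the paper argues part~(2) by induction on $i$ via the common translation; your explicit handling of the open-interior requirement in the hitting-set definition is in fact slightly more careful than the paper's.
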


\begin{proof}
To prove property~(1), observe that for any $i\neq j$, any interval in ${\cal I}_{i}$ is disjoint from any interval in ${\cal I}_{j}$. As a result, at least $\tau$ 
points are needed to hit all the intervals in ${\cal I}$.  On the other hand, 
for any $1\leq i \leq \tau$, all the intervals 
in  $I_{i}$ share the same left endpoint, and as such, these
$\tau$ left endpoints will form an optimal hitting set of ${\cal I}$.  

Property~(2) is established via induction. The property is true for 
$({\cal A}_1,{\cal I}_1)$, which is the base case. By induction hypothesis, 
assume that the property holds for ${\cal I}_{i-1}$. Then the $j$-th interval 
in ${\cal I}_{i-1}$ contains arms 
$A_{\frac{i-2}{8\vare} +1}, A_{\frac{i-2}{8\vare}+2} \ldots ,  A_{\frac{i-2}{8\vare} + j}$.
Since the $j$-th interval  in ${\cal I}_i$ is obtained by translating the 
$j$-th interval in ${\cal I}_{i-1}$ by $1+\frac{1}{8\vare}$, and the coordinate value 
of $A_{\frac{i-1}{8\vare} + j}$ is exactly $1+\frac{1}{8\vare}$ larger than 
$A_{\frac{i-2}{8\vare} + j}$, the property holds for ${\cal I}_i$ as well. 
\end{proof}

\begin{lemma}\label{lemma:correctness-1d-full}
Assume that for the constructed bandit range searching instance---with arms ${\cal A}$ and collection of intervals ${\cal I}$---the output is correct. Then, the game $\game$ can be answered correctly as well.  
\end{lemma}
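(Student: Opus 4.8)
The plan is to establish Lemma~\ref{lemma:correctness-1d-full} by leveraging the structural properties already proven in Lemma~\ref{lemma:hitting-set}, in direct analogy with the single-copy case of Lemma~\ref{lemma:correctness-1d}. The key observation is that the meta-instance $(\mathcal{A}, \mathcal{I})$ decomposes cleanly into $\tau$ independent blocks: by property~(1) of Lemma~\ref{lemma:hitting-set}, the intervals in $\mathcal{I}_i$ are pairwise disjoint from those in $\mathcal{I}_j$ for $i \neq j$, and by construction the arm sets $\mathcal{A}_i$ occupy disjoint $p$-value ranges. Thus, answering the range searching problem on $\mathcal{I}$ simultaneously answers each sub-collection $\mathcal{I}_i$, and the whole reduction reduces to applying the single-copy argument $\tau$ times in parallel.

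First I would fix a block index $1 \le i \le \tau$ and a within-block interval index $1 \le j \le \frac{1}{8\vare}$, and consider the $j$-th interval of $\mathcal{I}_i$. By property~(2) of Lemma~\ref{lemma:hitting-set}, this interval contains exactly the arms $A_{\frac{i-1}{8\vare}+1}, \ldots, A_{\frac{i-1}{8\vare}+j}$. Next I would identify the unique $\vare$-optimal arm among these. By the weight assignment, the arms in $A_{\frac{i-1}{8\vare}+t}$ all have weight $\frac14 + 4\vare(t-1)$ except the special arm indexed by $c_{\frac{i-1}{8\vare}+t}$, whose weight is $\frac14 + 2\vare + 4\vare(t-1)$. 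The maximum weight in the interval is attained (strictly, with a gap of at least $2\vare$ over every other arm) by the special arm in the last set $A_{\frac{i-1}{8\vare}+j}$, namely $A_{\frac{i-1}{8\vare}+j}\bigl[c_{\frac{i-1}{8\vare}+j}\bigr]$. Hence the only $\vare$-optimal arm for this interval is that special arm, so a correct range searching output identifies $c_{\frac{i-1}{8\vare}+j}$.

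Ranging $j$ over $1, \ldots, \frac{1}{8\vare}$ within block $i$ recovers all coordinates $c_{\frac{i-1}{8\vare}+1}, \ldots, c_{\frac{i-1}{8\vare} + \frac{1}{8\vare}}$, and ranging $i$ over $1, \ldots, \tau$ then recovers the entire vector $c = (c_1, \ldots, c_T)$, since the index ranges $\{\frac{i-1}{8\vare}+1, \ldots, \frac{i}{8\vare}\}$ partition $[T]$ as $i$ varies. Therefore I would set $c^* = (c_1, \ldots, c_T)$ from the range searching output and conclude $c^* = c$, answering $\game$ correctly.

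The argument here is essentially bookkeeping, so I expect no genuine obstacle; the only point requiring care is verifying the weight-gap claim that pins down a \emph{unique} $\vare$-optimal arm. One must confirm that the special arm in the last set $A_{\frac{i-1}{8\vare}+j}$ strictly dominates not only the non-special arms in that same set (a gap of $2\vare$) but also every arm---special or not---in the earlier sets $A_{\frac{i-1}{8\vare}+t}$ with $t < j$; since the largest weight in an earlier set is at most $\frac14 + 2\vare + 4\vare(t-1) \le \frac14 + 2\vare + 4\vare(j-2)$, which falls short of $\frac14 + 2\vare + 4\vare(j-1)$ by $4\vare > \vare$, the $\vare$-optimality is indeed unique. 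With that verification in hand, the proof follows the template of Lemma~\ref{lemma:correctness-1d} directly.
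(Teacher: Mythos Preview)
Your proposal is correct and follows essentially the same approach as the paper's own proof: fix a block $i$ and within-block index $j$, invoke property~(2) of Lemma~\ref{lemma:hitting-set} to list the arms in that interval, observe that the unique $\vare$-optimal arm is $A_{\frac{i-1}{8\vare}+j}[c_{\frac{i-1}{8\vare}+j}]$, and read off $c^*=c$. Your explicit weight-gap verification is a welcome addition that the paper leaves implicit, but otherwise the arguments coincide.
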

\begin{proof}
Consider the interval $j$-th interval in ${\cal I}_i$ for any $1\leq i \leq \tau$ and any $1\leq j\leq 1/8\vare$. 
By property~(2) of Lemma~\ref{lemma:hitting-set}, it contains arms of $A_{\frac{i-1}{8\vare} +1}, A_{\frac{i-1}{8\vare}+2} \ldots ,  A_{\frac{i-1}{8\vare} + j}$ and the only $\vare$-optimal arm among these  is 
$A_{\frac{i-1}{8\vare}+j}[c_{\frac{i-1}{8\vare}+j}]$. 
Hence, a correct output for the bandit range searching problem will 
be $A_t[c_t]$, for all $1\leq t\leq T$. 
Using this output, we will report  $c^{*}=(c_1,c_2,\ldots,c_T)$ as the output of 
the game $\game$ which is clearly equal to $c$.
\end{proof}

\begin{theorem}
For each $\tau \in \mathbb{Z}_+$, there exists a bandit range searching instance with $n$ arms and 
 collection of intervals $\mathcal{I}$ such that the size of optimal hitting set for $\mathcal{I}$ is $\tau$, and any algorithm that achieves the $(\varepsilon, \delta)$-PAC guarantee for the instance necessarily draws $\Omega\left(\frac{n}{\vare^2}\log\left(\frac{\tau}{\vare \delta}\right)\right)$ samples. 
\end{theorem}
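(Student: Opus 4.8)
The plan is to mirror the contradiction argument of Theorem~\ref{thm:warm-up-lb}, but now driven by the full $\tau$-copy construction together with Lemmas~\ref{lemma:hitting-set} and~\ref{lemma:correctness-1d-full}. I would fix the bandit range searching instance $(\mathcal{A}, \mathcal{I})$ built from the game $\game$ with $T = \frac{\tau}{8\vare}$. Property~(1) of Lemma~\ref{lemma:hitting-set} already certifies that $\mathcal{I}$ admits an optimal hitting set of size exactly $\tau$, and by construction $n = |\mathcal{A}| = mT = \frac{m\tau}{8\vare}$. This instance is therefore the witness required by the theorem statement, so it only remains to establish the sample lower bound for it.

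Next I would assume, towards a contradiction, that some $(\vare,\delta)$-PAC algorithm solves this instance drawing $o\!\left(\frac{n}{\vare^2}\log\left(\frac{\tau}{\vare\delta}\right)\right)$ samples. Running this algorithm, I translate each of its samples into a \emph{single} sample of the underlying game via the sampling correspondence fixed in the construction (sample the corresponding game arm, add an independent $\mathrm{Ber}(4\vare(t-1))$ draw, and report the sum). This correspondence is sample-preserving, so the game is solved using the same number of samples. By Lemma~\ref{lemma:correctness-1d-full}, whenever the PAC algorithm returns an $\vare$-optimal arm for every interval---which, by the $(\vare,\delta)$-PAC guarantee, happens with probability at least $1-\delta$---we recover the correct guess $c^* = c$ for $\game$ (here the key point, inherited from the construction, is that within each relevant interval the unique $\vare$-optimal arm is the planted arm $A_t[c_t]$, since the runner-up weight gap is $2\vare > \vare$). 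Thus $\game$ is answered correctly with probability at least $1-\delta$ using $o\!\left(\frac{n}{\vare^2}\log\left(\frac{\tau}{\vare\delta}\right)\right)$ samples.

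The final step is the arithmetic matching the two thresholds. Since $Tm = n$ and $T = \tau/(8\vare)$, we have $\frac{Tm}{\vare^2}\log\!\left(\frac{T}{\delta}\right) = \frac{n}{\vare^2}\log\!\left(\frac{\tau}{8\vare\delta}\right) = \Theta\!\left(\frac{n}{\vare^2}\log\!\left(\frac{\tau}{\vare\delta}\right)\right)$, the last equality absorbing the additive $\log 8$ into the constant. Hence the game would be answered with a sample count that is $o\!\left(\frac{Tm}{\vare^2}\log(T/\delta)\right)$, in particular below the threshold $C\cdot\frac{Tm}{\vare^2}\log(T/\delta)$ of Lemma~\ref{lemma:top-1-T} for any fixed constant $C$, contradicting that lemma. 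This yields the claimed $\Omega\!\left(\frac{n}{\vare^2}\log\!\left(\frac{\tau}{\vare\delta}\right)\right)$ lower bound. The step I expect to require the most care is not a genuine obstacle---the substantive work lives in the preceding lemmas---but rather two bookkeeping points: verifying that the reduction is sample-preserving (so that a bandit-sample budget bounds the game-sample budget exactly), and respecting that Lemma~\ref{lemma:top-1-T} is phrased in terms of \emph{expected} samples, so the assumed algorithm's budget should be read as an expected (equivalently, worst-case, which dominates it) bound.
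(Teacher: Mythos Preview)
Your proposal is correct and follows exactly the approach the paper takes: the paper's own proof simply says ``analogous to the proof of Theorem~\ref{thm:warm-up-lb} with $T=\tau/8\vare$ and $n = mT$,'' invoking property~(1) of Lemma~\ref{lemma:hitting-set}, and you have faithfully spelled out those details (contradiction via Lemma~\ref{lemma:correctness-1d-full} and the arithmetic matching $Tm/\vare^2\log(T/\delta)$ with $n/\vare^2\log(\tau/\vare\delta)$).
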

\begin{proof}
The proof is analogous to the proof of Theorem~\ref{thm:warm-up-lb} with $T=\tau/8\vare$ and $n = mT$. Note that $\tau$ is indeed the size of the optimal hitting set for $\mathcal{I}$ (property~(1) of Lemma~\ref{lemma:hitting-set}).
\end{proof}

\subsection{Reduction to Bandit Range Searching with Multi-Dimensional Weights}
 At a high-level, our strategy will be similar to the single-dimensional setting. 
 We will first construct a range search instance with set of arms $\mathcal{A}_1$ 
 and interval collection $\mathcal{I}_1$, and  subsequently strengthen the reduction by 
 considering  separate copies of $\mathcal{A}_1$ and $\mathcal{I}_1$ in a meta-instance.
  
We fix an ambient dimension $d$ for the weights and start with a game $\gameone$ wherein $T_1 = \left(\frac{1}{8\vare}\right)^d$. 
We construct a bandit range setting instance (with $d$-dimensional weights) by setting $\mathcal{A}_1$ to be the set of all the arms in the game. 
We associate a point with each arm in $\arm_1$ as follows: for all $1\leq t\leq T_1$ and each arm $a \in A_t$,
 set $p_a = t$. 
 
To sample a particular arm  from $A_t$ in the range searching setup we do the following:
let $k_1 \  k_2\ldots k_d$ be the representation of integer $(t-1)$ in base-$\left(\frac{1}{8\vare}\right)$ 
(for example, base-$2$ implies binary representation). Then,
(a) sample the corresponding arm in the game
and (b) independently draw from the following multivariate Bernoulli distribution $({\rm Ber}(4\vare k_1),\ldots, {\rm Ber}(4\vare k_d))$. 
Let $\alpha$ and $(\beta_1,\ldots,\beta_d)$ be the output from (a) and (b), respectively.
Then, sampling that arm in $A_t \subseteq \arm_1$  will report $(\alpha+\beta_1,\ldots, \alpha + \beta_d)$.
This sampling correspondence and linearity of expectation ensure that (for each $ t \in [T_1]$) the (unknown) weight/mean $\mu_a = (\frac{1}{4} + 4\vare k_1,\ldots, 
\frac{1}{4} + 4\vare k_d)$ for all arms $a$ in $A_t$, except for $A_t[c_t]$ whose weight is equal to 
$(\frac{1}{4}+2\vare + 4\vare k_1, \ldots, \frac{1}{4}+2\vare + 4\vare k_d)$. 
It is easy to verify that the weights are in the range $\left[\frac{1}{4},\frac{3}{4}\right]^d \subseteq [0,1]^d$.

Write the number of arms $n_1 \coloneqq |\mathcal{A}_1|$. Note that, by construction, $n_1 = m T_1$. 
The query intervals in the bandit 
range searching instance will be the set of intervals ${\cal I}_1=\{I_1,\ldots, I_{1/(8\vare)^d}\}$ 
 such that $I_t=[0,t]$. Now we lower bound the number of samples required to solve the bandit range searching problem with arms ${\cal A}_1$ and collection of intervals ${\cal I}_1$.

\begin{lemma}\label{lemma:single-arm}
For every $1\leq t\leq T_1$, the $\vare$-Pareto optimal set of $A_t$ is $\{A_t[c_t]\}$.
\end{lemma}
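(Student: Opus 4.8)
The plan is to exploit the rigid two-valued weight structure inside a single block $A_t$. Writing $k_1 k_2 \ldots k_d$ for the base-$(1/8\vare)$ representation of $t-1$, I would first record the key observation: every arm $A_t[i]$ with $i \neq c_t$ carries the \emph{identical} weight vector $w \coloneqq (\tfrac14 + 4\vare k_1, \ldots, \tfrac14 + 4\vare k_d)$, while the single distinguished arm $A_t[c_t]$ has weight $w + 2\vare\,\mathbbm{1}$. Thus $A_t[c_t]$ dominates every other arm of $A_t$ by exactly $2\vare$ in each coordinate, and all remaining arms are mutually weight-identical. With this in hand, both conditions of Definition~\ref{definition:eps-Pareto-optimal} (applied with $\arm(I) = A_t$) reduce to one-line comparisons against $w$ and $w + 2\vare\,\mathbbm{1}$.

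Next I would verify that the singleton $T = \{A_t[c_t]\}$ satisfies both parts of Definition~\ref{definition:eps-Pareto-optimal}. For part~(a): the arm $A_t[c_t]$ satisfies $\mu_{A_t[c_t]} = w + 2\vare\,\mathbbm{1} \geq w - \vare\,\mathbbm{1}$, hence it $\vare$-dominates every non-distinguished arm, and it trivially dominates itself. For part~(b): since $w + 2\vare\,\mathbbm{1}$ is the coordinate-wise maximum weight occurring in $A_t$, no arm $b$ satisfies $\mu_b \geq \mu_{A_t[c_t]} + \vare\,\mathbbm{1} = w + 3\vare\,\mathbbm{1}$. Together these show $\{A_t[c_t]\}$ is $\vare$-Pareto optimal.

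To justify the definite article in the statement---i.e.\ uniqueness, which is what the reduction actually requires---I would then show that \emph{any} $\vare$-Pareto optimal set $T$ equals $\{A_t[c_t]\}$. Applying part~(a) to $b = A_t[c_t]$ forces some $a \in T$ with $\mu_a \geq w + 2\vare\,\mathbbm{1} - \vare\,\mathbbm{1} = w + \vare\,\mathbbm{1}$; since every non-distinguished arm has weight exactly $w$, which is \emph{strictly} smaller than $w + \vare\,\mathbbm{1}$ in every coordinate, the only admissible choice is $a = A_t[c_t]$, so $A_t[c_t] \in T$. Conversely, if some $A_t[i]$ with $i \neq c_t$ belonged to $T$, then $b = A_t[c_t]$ would witness $\mu_b = w + 2\vare\,\mathbbm{1} \geq \mu_{A_t[i]} + \vare\,\mathbbm{1} = w + \vare\,\mathbbm{1}$, violating part~(b) for $a = A_t[i]$; hence no such arm lies in $T$. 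Combining the two, $T = \{A_t[c_t]\}$.

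The computations are all immediate once the two-weight structure is isolated, so there is no genuine obstacle; the only point requiring care is the direction and strictness of the component-wise inequalities---specifically that the $2\vare$ gap is simultaneously large enough for part~(a) to force $A_t[c_t]$ into $T$ (via $w \not\geq w + \vare\,\mathbbm{1}$) and large enough for part~(b) to expel every other arm. This is precisely what makes the lemma useful downstream: because the $\vare$-Pareto optimal set is pinned to be exactly $\{A_t[c_t]\}$, recovering it reveals $c_t$, which is what the subsequent reduction to $\game$ exploits (mirroring the single-dimensional statement that $A_t[c_t]$ is the unique $\vare$-optimal arm).
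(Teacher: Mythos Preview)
Your proposal is correct and follows essentially the same approach as the paper: use condition~(a) of Definition~\ref{definition:eps-Pareto-optimal} applied at $b=A_t[c_t]$ to force $A_t[c_t]\in T$, and use condition~(b) with witness $b=A_t[c_t]$ to exclude every other arm. Your write-up is simply more explicit---you also verify that $\{A_t[c_t]\}$ itself satisfies both conditions and you spell out the strictness needed in the component-wise comparisons---whereas the paper compresses the argument into two sentences.
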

\begin{proof}
For each $t$, the weight of $A_t[c_t]$ is componentwise greater than the weight $A_t[i]$ by $2\vare$, for $i \neq c_t$.
The first condition of $\vare$-Pareto optimality (Definition~\ref{definition:eps-Pareto-optimal}) ensures that 
$A_t[c_t]$ is part of the $\vare$-Pareto optimal set. Also, focussing on the second condition of $\vare$-Pareto optimality with respect to $A_t[c_t]$,  we get that none of the other arms in $A_t$ can be part of the $\vare$-Pareto optimal set.
\end{proof}

Before going further, we will define {\em $\vare$-domination}: an arm $a$ is said to $\vare$-dominate another arm $b$ iff $\mu_a \geq \mu_b -\vare \mathbbm{1}$. Recall that $\mu_x$ is the $d$-dimensional weight of an arm $x$ and $\mathbbm{1}$ denotes the all-ones vector.

\begin{lemma}\label{lemma:cannot-dominate}
Given two integers $t'$ and $t$ such that $1\leq t'<t\leq T_1$,  arm $A_{t'}[c_{t'}]$ cannot $\vare$-dominate arm $A_{t}[c_{t}]$.
This also implies that none of the arms in $A_{t'}$ can $\vare$-dominate arm $A_t[c_t]$. 
\end{lemma}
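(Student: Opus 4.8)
The plan is to compare the weight vectors of $A_{t'}[c_{t'}]$ and $A_t[c_t]$ coordinate-by-coordinate, using the base-$\left(\frac{1}{8\vare}\right)$ digit representations that define them. I would write $k_1 k_2 \ldots k_d$ for the digits of $(t-1)$ and $k'_1 k'_2 \ldots k'_d$ for the digits of $(t'-1)$, so that the $j$-th coordinate of $\mu_{A_t[c_t]}$ equals $\frac{1}{4}+2\vare+4\vare k_j$ and the $j$-th coordinate of $\mu_{A_{t'}[c_{t'}]}$ equals $\frac{1}{4}+2\vare+4\vare k'_j$. The $\vare$-domination condition $\mu_{A_{t'}[c_{t'}]} \geq \mu_{A_t[c_t]} - \vare \mathbbm{1}$ then reduces, in each coordinate $j$, to $4\vare(k_j - k'_j) \leq \vare$, i.e.\ $k_j - k'_j \leq \frac{1}{4}$. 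Since the digits are integers, this is equivalent to requiring $k_j \leq k'_j$ for \emph{every} coordinate $j$.

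The crux is to rule this out using $t' < t$. The key observation is that positional representation is monotone: if $k_j \leq k'_j$ held for all $j$, then summing against the place values would give $(t-1) \leq (t'-1)$, contradicting $t' < t$. Hence there must exist at least one coordinate $j^\star$ with $k_{j^\star} > k'_{j^\star}$, and because the digits are integers this yields $k_{j^\star} - k'_{j^\star} \geq 1$. In that coordinate the weight gap is $4\vare(k_{j^\star}-k'_{j^\star}) \geq 4\vare > \vare$, so $\left(\mu_{A_{t'}[c_{t'}]}\right)_{j^\star} < \left(\mu_{A_t[c_t]}\right)_{j^\star} - \vare$. This single violated coordinate shows $A_{t'}[c_{t'}]$ does not $\vare$-dominate $A_t[c_t]$.

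For the second assertion, I would invoke monotonicity of $\vare$-domination in the dominating arm's weight. Every arm $a \in A_{t'}$ has weight componentwise at most that of $A_{t'}[c_{t'}]$ (the non-special arms are exactly $2\vare\mathbbm{1}$ smaller in each coordinate). Thus if some $a \in A_{t'}$ satisfied $\mu_a \geq \mu_{A_t[c_t]} - \vare\mathbbm{1}$, then \emph{a fortiori} $\mu_{A_{t'}[c_{t'}]} \geq \mu_a \geq \mu_{A_t[c_t]} - \vare\mathbbm{1}$, contradicting the first part. Hence no arm in $A_{t'}$ can $\vare$-dominate $A_t[c_t]$.

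The main obstacle is the monotonicity step: everything else is a direct coordinate-wise computation, but one must observe that the $4\vare$ spacing between consecutive digit-values strictly exceeds the $\vare$ slack allowed by domination, which is precisely what forces a violation as soon as a single digit of $(t-1)$ exceeds the corresponding digit of $(t'-1)$.
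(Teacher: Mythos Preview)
Your argument is correct and follows essentially the same approach as the paper: both exploit the base-$\left(\tfrac{1}{8\vare}\right)$ digit representations of $(t-1)$ and $(t'-1)$ to locate a coordinate $\ell$ with $k'_\ell < k_\ell$, and then observe that the resulting $4\vare$ gap in that coordinate strictly exceeds the allowed $\vare$ slack. You in fact supply slightly more detail than the paper, which simply asserts the existence of such an $\ell$ and leaves the ``also implies'' clause implicit, whereas you justify both via the monotonicity of positional representation and the componentwise ordering of arms in $A_{t'}$.
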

\begin{proof}
 Let $k_1k_2\ldots k_d$ (respectively, $k_1'k_2'\ldots k_d'$) 
be the representation of $(t-1)$ (respectively, $(t'-1)$) in base-$\left(\frac{1}{8\vare}\right)$.
Since $t'< t$, there exists an index $\ell \in [d]$ such that such that 
\begin{align*}
k_{\ell}' < k_{\ell} & \implies k_{\ell}' \leq k_{\ell}-1 \implies 4\vare k_{\ell}' \leq 4\vare k_{\ell} -4\vare \\
& \implies 4\vare k_{\ell}' + \vare < 4\vare k_{\ell} .\\
\end{align*}

This inequality for the $\ell$-th coordinate ensures that the arm $A_{t'}[c_{t'}]$ cannot $\vare$-dominate arm $A_{t}[c_{t}]$.
 \end{proof}

\begin{lemma}\label{lemma:correctness-simple-d}
Assume that for the constructed bandit range searching instance---with arms ${\cal A}_1$ and collection of intervals ${\cal I}_1$---the output is correct. Then, the game $\gameone$ can be answered correctly as well.
\end{lemma}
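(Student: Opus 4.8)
The plan is to mirror the single-dimensional argument of Lemma~\ref{lemma:correctness-1d}, but now extracting each coordinate $c_t$ from the $\varepsilon$-Pareto optimal set returned for the interval $I_t=[0,t]$ rather than from a single optimal arm. First I would record the structural fact that, by construction, every arm of $A_t$ is placed at the point $p=t$, so $\arm(I_t)=A_1\cup A_2\cup\cdots\cup A_t$ and the arms sitting exactly at point $t$ are precisely those of $A_t$. Writing $T_t$ for the (correct, hence $\varepsilon$-Pareto optimal) set returned for $I_t$, the goal is to show that $T_t$ contains $A_t[c_t]$ and no other arm of $A_t$. The unique member of $T_t$ at point $t$ then reveals $c_t$, and reading it off for every $t\in[T_1]$ reconstructs $c=(c_1,\ldots,c_{T_1})$, which is the answer to $\gameone$.

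The key step is to pin down which arms of $\arm(I_t)$ can $\varepsilon$-dominate $A_t[c_t]$, and to argue that the only such arm is $A_t[c_t]$ itself. By Lemma~\ref{lemma:cannot-dominate}, no arm of $A_{t'}$ with $t'<t$ can $\varepsilon$-dominate $A_t[c_t]$; and within $A_t$ every other arm $A_t[i]$ (with $i\neq c_t$) satisfies $\mu_{A_t[i]}=\mu_{A_t[c_t]}-2\varepsilon\mathbbm{1}$, so it falls short of $\varepsilon$-domination by a full $\varepsilon\mathbbm{1}$. Given that $A_t[c_t]$ is $\varepsilon$-dominated within $\arm(I_t)$ only by itself, condition (a) of Definition~\ref{definition:eps-Pareto-optimal} forces $A_t[c_t]\in T_t$: applied with $b=A_t[c_t]$, condition (a) demands some arm of $T_t$ that $\varepsilon$-dominates $A_t[c_t]$, and the sole candidate in $\arm(I_t)$ is $A_t[c_t]$.

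Next I would rule out the remaining arms of $A_t$ from $T_t$ using condition (b): since $\mu_{A_t[c_t]}=\mu_{A_t[i]}+2\varepsilon\mathbbm{1}\geq\mu_{A_t[i]}+\varepsilon\mathbbm{1}$ for every $i\neq c_t$, and $A_t[c_t]\in\arm(I_t)$, each such $A_t[i]$ is $\varepsilon$-dominated and hence excluded from the $\varepsilon$-Pareto optimal set $T_t$. Combining the two preceding steps gives $T_t\cap A_t=\{A_t[c_t]\}$, so the unique member of $T_t$ at point $p=t$ is exactly $A_t[c_t]$, whose label inside $A_t$ is $c_t$; doing this across all $t$ yields $c$. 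The main obstacle is precisely these two steps rather than any calculation: one must invoke both clauses of $\varepsilon$-Pareto optimality in tandem---condition (a) to certify that $A_t[c_t]$ is present, and condition (b) to certify it is the \emph{only} point-$t$ arm present---and carefully combine the two domination facts (Lemma~\ref{lemma:cannot-dominate} across slices and the within-slice $2\varepsilon$ gap) to establish that $A_t[c_t]$ is $\varepsilon$-dominated only by itself. Everything else is bookkeeping directly analogous to the scalar case.
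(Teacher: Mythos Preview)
Your proposal is correct and follows essentially the same approach as the paper: both show that the output for $I_t$ intersected with $A_t$ is exactly $\{A_t[c_t]\}$, using Lemma~\ref{lemma:cannot-dominate} to rule out $\varepsilon$-domination from earlier slices and the $2\varepsilon$ within-slice gap to handle $A_t$ itself. The only cosmetic difference is that the paper packages the within-$A_t$ argument into Lemma~\ref{lemma:single-arm}, whereas you invoke conditions~(a) and~(b) of Definition~\ref{definition:eps-Pareto-optimal} directly; your version is in fact slightly more explicit about why condition~(b) excludes the other arms of $A_t$ from $T_t$ for the full interval $I_t$ (not just for $A_t$ alone).
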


\begin{proof}
For any $1\leq t\leq T_1$, the interval $I_t$ 
will contain arms in $A_1,\ldots,A_t$. We claim that the $\vare$-Pareto optimal 
set of $A_1\cup A_2 \cup \ldots \cup A_t$ includes $A_t[c_t]$: firstly, by Lemma~\ref{lemma:single-arm}, 
the $\vare$-Pareto optimal set of $A_{t}$ will be $\{A_t[c_t]\}$ and, hence, all the other 
arms in $A_t$ can be discarded. Next, by Lemma~\ref{lemma:cannot-dominate}, 
no arm from $A_1 \cup \ldots \cup A_{t-1}$ 
can $\vare$-dominate $A_t[c_t]$.  This implies that 
 $A_t[c_t]$ will necessarily be part of the output for $I_t$; in particular, the intersection of the output for $I_t$ and $A_t$ will be $\{ A_t[c_t] \}$. 

Therefore, using the output for the bandit range searching instance, we can report $c^*=(c_1,\ldots,c_T)$ as the answer for the game $\gameone$,  which is equal to $c$ and, hence, answers the game correctly. 
\end{proof}

\begin{theorem}
Any algorithm that achieves the $(\varepsilon, \delta)$-PAC guarantee for bandit range searching instances---with $n_1$ arms and multi-dimensional weights---has sample complexity 
$\Omega\left(\frac{n_1}{\vare^2}\log\left(\frac{1}{\vare^d\delta}\right)\right)$.
\end{theorem}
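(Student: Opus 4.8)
The plan is to mirror the argument of Theorem~\ref{thm:warm-up-lb} essentially verbatim, substituting the multi-dimensional reduction just constructed and tracking the single parameter change $T_1=\left(\frac{1}{8\vare}\right)^d$. I would argue by contradiction: suppose some algorithm achieves the $(\vare,\delta)$-PAC guarantee for the constructed instance $(\mathcal{A}_1,\mathcal{I}_1)$ while drawing $o\left(\frac{n_1}{\vare^2}\log\left(\frac{1}{\vare^d\delta}\right)\right)$ samples in expectation. Running this algorithm on the instance produced by the reduction, Lemma~\ref{lemma:correctness-simple-d} guarantees that whenever the range-searching output is correct, the answer $c^*=(c_1,\ldots,c_{T_1})$ extracted from it solves the game $\gameone$ correctly. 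Hence the hypothesized algorithm answers $\gameone$ correctly with probability at least $1-\delta$.

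The crux is then a direct bookkeeping of parameters against Lemma~\ref{lemma:top-1-T}. Here $T_1=\left(\frac{1}{8\vare}\right)^d$ and, by construction, $n_1=mT_1$, so the threshold $C\cdot\frac{T_1 m}{\vare^2}\log\left(\frac{T_1}{\delta}\right)$ from Lemma~\ref{lemma:top-1-T} equals $C\cdot\frac{n_1}{\vare^2}\log\left(\frac{1}{(8\vare)^d\delta}\right)=\Theta\left(\frac{n_1}{\vare^2}\log\left(\frac{1}{\vare^d\delta}\right)\right)$, using $\log\frac{1}{(8\vare)^d}=\Theta\left(d\log\frac{1}{\vare}\right)=\Theta\left(\log\frac{1}{\vare^d}\right)$. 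Since the assumed sample budget is asymptotically below this threshold, Lemma~\ref{lemma:top-1-T} forces $\mathbb{P}_{\gameone}\{c^*=c\}<1-\delta$, contradicting the success probability established in the previous paragraph. The contradiction yields the claimed $\Omega\left(\frac{n_1}{\vare^2}\log\left(\frac{1}{\vare^d\delta}\right)\right)$ lower bound.

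I expect essentially all the nontrivial content to already reside in the reduction's correctness, which I would take as given. The heart of the matter is the base-$\left(\frac{1}{8\vare}\right)$ encoding of the index $t$ into the $d$ weight coordinates: this spreads the $T_1$ blocks $A_t$ onto pairwise-incomparable weight vectors so that, via Lemmas~\ref{lemma:single-arm} and~\ref{lemma:cannot-dominate}, the $\vare$-Pareto optimal set for interval $I_t$ isolates exactly $A_t[c_t]$ (these are precisely what Lemma~\ref{lemma:correctness-simple-d} packages). Given those lemmas, the remaining work is purely the parameter substitution above, and the only point meriting care is confirming that $\log(T_1/\delta)$ carries the intended $\log\left(\frac{1}{\vare^d\delta}\right)$ dependence after plugging in $T_1=\left(\frac{1}{8\vare}\right)^d$ — which is exactly the computation displayed above. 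Thus I anticipate no genuine obstacle at this stage; the geometric and domination structure has been fully absorbed into the earlier lemmas.
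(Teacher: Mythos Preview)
Your proposal is correct and follows exactly the paper's approach: the paper's proof is literally the single sentence ``The proof is analogous to the proof of Theorem~\ref{thm:warm-up-lb} with $T_1=\left(\frac{1}{8\vare}\right)^d$,'' and you have faithfully unpacked precisely that analogy, invoking Lemma~\ref{lemma:correctness-simple-d} for correctness of the reduction and substituting $T_1=\left(\frac{1}{8\vare}\right)^d$, $n_1=mT_1$ into the threshold of Lemma~\ref{lemma:top-1-T}.
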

\begin{proof}
The proof is analogous to the proof of Theorem~\ref{thm:warm-up-lb} with $T_1=\left(\frac{1}{8\vare}\right)^d$.
\end{proof}

In the one-dimensional case, we strengthened the lower bound 
from $\Omega\left(\frac{n_1}{\vare^2}\log\left(\frac{1}{\vare \delta}\right)\right)$ to 
$\Omega\left(\frac{n}{\vare^2}\log\left(\frac{\tau}{\vare\delta}\right)\right)$ by 
setting $T=\frac{\tau}{8\vare}, n=mT$ and by translating the intervals and the arms 
by $1+ \frac{1}{8\vare}$. For the multi-dimensional case, we will follow exactly 
the same approach to strengthen the lower bound 
from $\Omega\left(\frac{n_1}{\vare^2}\log\left(\frac{1}{\vare^d \delta}\right)\right)$ to 
$\Omega\left(\frac{n}{\vare^2}\log\left(\frac{\tau}{\vare^d\delta}\right)\right)$ by
setting the parameters $T=\frac{\tau}{(8\vare)^d}, m=n/T$ and by translating the intervals and the arms 
by $1+ \frac{1}{(8\vare)^d}$. This establishes Theorem~\ref{thm:final-lb}.

\begin{restatable}{theorem}{TheoremLowerBound}
\label{thm:final-lb}
For each $\tau \in \mathbb{Z}_+$, there exists a bandit range searching instance, with $n$ arms and interval collection $\mathcal{I}$, such that the size of optimal hitting set for $\mathcal{I}$ is $\tau$, and any algorithm that achieves the $(\varepsilon, \delta)$-PAC guarantee for the instance necessarily draws $\Omega\left(\frac{n}{\vare^2}\log\left(\frac{\tau}{\vare^d\delta}\right)\right)$ samples. Here, $d$ is the ambient dimension of the weights. 
\end{restatable}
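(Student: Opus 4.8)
The plan is to lift the single-copy $d$-dimensional lower bound established immediately above—namely $\Omega\left(\frac{n_1}{\vare^2}\log\left(\frac{1}{\vare^d\delta}\right)\right)$—to a $\tau$-copy meta-instance, mirroring exactly the strengthening carried out in the single-dimensional setting through Lemma~\ref{lemma:hitting-set} and Lemma~\ref{lemma:correctness-1d-full}. First I would start from a single game $\game$ with $T = \frac{\tau}{(8\vare)^d}$ and note that, since $T/\tau = \left(\frac{1}{8\vare}\right)^d$, it decomposes into $\tau$ independent sub-games of the form $\gameone$ with $T_1 = \left(\frac{1}{8\vare}\right)^d$. Using the $i$-th block of arm-sets $A_{(i-1)T_1+1}, \ldots, A_{iT_1}$, I would build the $d$-dimensional range-searching instance $(\mathcal{A}_i, \mathcal{I}_i)$ precisely as in the single-copy construction above, and then shift the points $\{p_a\}$ and the interval collection $\mathcal{I}_i$ rightward by $(i-1)\left(1 + \left(\frac{1}{8\vare}\right)^d\right)$ so that the $\tau$ copies occupy pairwise-disjoint stretches of the real line. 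Setting $\mathcal{A} = \bigcup_{i=1}^\tau \mathcal{A}_i$ and $\mathcal{I} = \bigcup_{i=1}^\tau \mathcal{I}_i$ then gives $n = |\mathcal{A}| = mT$ arms.

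Next I would prove the two structural facts underlying the reduction, both $d$-dimensional analogues of Lemma~\ref{lemma:hitting-set}. For the hitting-set size: the per-copy shift $1 + \left(\frac{1}{8\vare}\right)^d$ strictly exceeds the spatial extent $T_1 = \left(\frac{1}{8\vare}\right)^d$ of each copy, so intervals from distinct copies are pairwise disjoint, forcing at least $\tau$ hitting points; meanwhile, within each copy all intervals $I_1, \ldots, I_{T_1}$ share a common left endpoint, so a single point per copy hits them all—hence the optimal hitting set has size exactly $\tau$. For the containment property, an induction identical to property~(2) of Lemma~\ref{lemma:hitting-set} shows that the $j$-th interval of $\mathcal{I}_i$ contains precisely the arm-sets $A_{(i-1)T_1+1}, \ldots, A_{(i-1)T_1+j}$.

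With these facts in hand, the correctness reduction—the across-copies analogue of Lemma~\ref{lemma:correctness-simple-d}, playing the role of Lemma~\ref{lemma:correctness-1d-full}—follows from Lemmas~\ref{lemma:single-arm} and~\ref{lemma:cannot-dominate}. The key point is that the rightward shift changes only the point coordinates $p_a$ and leaves every weight $\mu_a$ untouched, so both domination lemmas, which reason purely about weights, transfer verbatim: for the $j$-th interval of $\mathcal{I}_i$ (with $t = (i-1)T_1 + j$), the special arm $A_t[c_t]$ must appear in any $\vare$-Pareto optimal output, and no earlier arm $\vare$-dominates it. Consequently a correct range-searching solution recovers every coordinate $c_t$ and thereby answers the full game $\game$. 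Combining this with Lemma~\ref{lemma:top-1-T} and substituting $T = \frac{\tau}{(8\vare)^d}$ and $n = mT$ into the bound $\Omega\left(\frac{Tm}{\vare^2}\log\frac{T}{\delta}\right)$—with the $8^d$ constant inside the logarithm absorbed under the standing assumption $\vare \leq \frac{1}{8}$—yields the claimed $\Omega\left(\frac{n}{\vare^2}\log\frac{\tau}{\vare^d\delta}\right)$.

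I expect the only real obstacle to be bookkeeping rather than anything conceptual: checking that the single shift value simultaneously guarantees interior-disjointness across copies and preserves the common-left-endpoint structure inside each copy, so that the hitting number is \emph{exactly} $\tau$. The two domination lemmas need no reproof, since they depend solely on weights, which the translation leaves invariant.
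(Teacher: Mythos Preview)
Your proposal is correct and follows essentially the same approach as the paper: the paper's own argument for Theorem~\ref{thm:final-lb} is a brief sketch that says to replicate the single-dimensional strengthening (Lemmas~\ref{lemma:hitting-set} and~\ref{lemma:correctness-1d-full}) with parameters $T=\frac{\tau}{(8\vare)^d}$, $n=mT$, and translation $1+\frac{1}{(8\vare)^d}$, which is exactly what you lay out. If anything, your write-up is more detailed than the paper's, explicitly noting that the domination lemmas transfer because the shift affects only the $p_a$ values and not the weights.
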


\section{Conclusion and Future Work}
This work establishes tight sample-complexity bounds for the problem of identifying optimal arms within a given collection of intervals. Complementing the current focus on sample-efficient algorithms, one can also consider a \emph{regret} version of the range searching. Specifically,  developing algorithms that obtain sublinear regret across all input intervals is an interesting direction of future work. 

Extending the range aspects to higher dimensions (e.g., finding optimal arms within rectangles) would also be interesting. More generally, the interplay of computational geometry and multi-armed bandits stands as a rich source of geometric problems over uncertain data.

\section*{Acknowledgements}
Siddharth Barman gratefully acknowledges the support of a Ramanujan Fellowship (SERB - {SB/S2/RJN-128/2015}). 
Saladi Rahul's research is generously supported by an IISc Startup Grant.

\bibliographystyle{alpha}
\bibliography{range-bandits}

\appendix
\section{Missing Proofs from Section \ref{section:multi-dimension-algos}}
\label{appendix:missing-proofs-mutli-dimension-algos}

\subsection{Proof of Proposition \ref{propsition:syline-multi-d-alternative}}
\label{appendix:skyline-equivalence-proof}

\PropositionSkylineCharacter*

\begin{proof}
First, we will show that condition (i) in Definition~\ref{definition:skyline-multi-d} and the proposition, respectively, are equivalent. 

\noindent
\emph{Condition (i):} For the forward direction (i.e., from Definition \ref{definition:skyline-multi-d} to Proposition \ref{propsition:syline-multi-d-alternative}),
consider any arm $b \in \arm(J)$. Since $L$ contains $\varepsilon$-Pareto optimal arms $T$ for the interval $[p_{b}, r]$, we get (invoking Definition \ref{definition:eps-Pareto-optimal}) that there necessarily exists an arm $\beta \in T \subseteq L$ that satisfies $\mu_\beta  \geq  \mu_b - \varepsilon \mathbbm{1}$. Since all the arms in $T$ (in particular, $\beta$) lie within $[p_b, r]$, we also have $p_\beta \geq p_b$. 

\noindent
For the reverse direction, consider any arm $b \in \arm(J)$ and let $\arm([p_b, r])$ denote all the arms in the interval $[p_b, r]$, i.e., $\arm([p_b, r])  = \{x \in \arm(J) \mid p_x \geq p_b\}$. Condition (i) of the proposition (applied to each arm $b' \in \arm([p_b, r])$) implies that for every arm $b' \in \arm([p_b, r])$ there exists an arm $\beta' \in L$ such that $p_{\beta'} \geq p_{b'}$ and $\mu_{\beta'} \geq \mu_{b'} - \varepsilon \mathbbm{1}$. The set of all such arms $\beta'$s encapsulate an $\varepsilon$-Pareto optimal subset $T$ for the interval $[p_b, r]$. \\

To complete the proof we will establish the equivalence of condition (ii) in Definition~\ref{definition:skyline-multi-d} and the proposition, respectively. 

\noindent
\emph{Condition (ii):}
Going from the definition to the proposition, consider any arm $\beta \in L$ and let $b \in \arm(J)$ be an arm such that $\beta$ is $\varepsilon$-Pareto optimal in $[p_b, r]$. Hence, by Definition \ref{definition:eps-Pareto-optimal}, we get that there exists no arm $y \in \arm([p_b, r])$ with the property that $\mu_y \geq \mu_\beta + \varepsilon \mathbbm{1}$. Since $p_\beta \geq p_b$, we have $\arm([p_b, r]) \supseteq \arm([p_\beta, r]) = \{ x \in \arm(J) \mid p_x \geq p_\beta \}$. Therefore, in particular, there does not exist an arm $x \in \arm(J)$ such that $\mu_x \geq \mu_\beta + \varepsilon \mathbbm{1}$ and $p_x \geq p_\beta$. \\
Finally, note that if for an arm $\beta$ there does not exist an arm $x \in \arm(J)$ such that $p_x \geq p_\beta$ and $\mu_x \geq \mu_\beta + \varepsilon \mathbbm{1}$, then $\beta$ is $\varepsilon$-Pareto optimal in $[p_\beta, r]$ itself. That is, condition (ii) of the proposition implies condition (ii) in Definition \ref{definition:skyline-multi-d}. 
\end{proof}

\subsection{Proof of Lemma \ref{lemma:iterative-pac-guarantee}}
\label{appendix:reps}

\LemmaRepsToReps*
\begin{proof}
We will prove the lemma in two steps. 

\noindent
({\rm I}): First, we will establish that (with high probability) for every representative arm $r \in R_t$, there exists an arm $r' \in A_{t+1}$ with the properties that $p_{r'} \geq p_r$ and $\mu_{r'}   \geq \mu_r - \tfrac{3 \varepsilon_t}{4}\mathbbm{1}$. 

\noindent 
({\rm II}): Then, we show that for such an $r' \in A_{t+1}$, there exists an arm $\gamma \in R_{t+1}$ that satisfies $p_{\gamma} \geq p_{r'}$ and $\mu_{\gamma} \geq \mu_{r'}  - \tfrac{\varepsilon_t}{4} \mathbbm{1}$. 

The two steps in conjunction ensure the existence of the desired arm $\gamma \in R_{t+1}$ and, hence, will complete the proof. 

We will prove that ({\rm I}) and ({\rm II}) will hold under the following high-probability events: write $\mathcal{E}_1$ to denote the event that $\norm{\mu_x - \muhat_x}_{\infty} \leq \tfrac{\varepsilon_t}{4}$ for all arms $x \in R_t$, and let $\mathcal{E}_2$ denote the event that for at most $\tfrac{|A_t|}{30} \left(\tfrac{\varepsilon_t}{4}\right)^d$ arms $f \in A_t$ we have $\norm{\mu_f - \muhat_f}_{\infty} \geq \tfrac{\varepsilon_t}{4}$, i.e., $\mathcal{E}_2$ denotes $\left| \left\{ f \in A_t \ : \ \norm{\mu_f - \muhat_f}_{\infty} \geq \tfrac{\varepsilon_t}{4} \right\}\right| \leq \tfrac{|A_t|}{30} \left(\tfrac{\varepsilon_t}{4}\right)^d$

\begin{claim}
\label{claim:bad-event-unlikely}
$\mathbb{P}\left\{\mathcal{E}_1 \cap \mathcal{E}_2 \right\} \geq 1 - \delta_t$
\end{claim}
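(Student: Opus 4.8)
The plan is to bound the two failure events separately and combine them by a union bound, $\mathbb{P}\{(\mathcal{E}_1 \cap \mathcal{E}_2)^c\} \leq \mathbb{P}\{\mathcal{E}_1^c\} + \mathbb{P}\{\mathcal{E}_2^c\}$, and to show the right-hand side is at most $\delta_t$. Both bounds rest on a single per-coordinate Hoeffding estimate. Each arm $a \in A_t$ is sampled $N_s := \frac{8}{\varepsilon_t^2}\log\!\left(\left(\frac{4}{\varepsilon_t}\right)^d \frac{50d}{\delta_t}\right)$ times in Line~\ref{line:pull}, so each coordinate of $\muhat_a$ is an average of $N_s$ independent $[0,1]$-valued draws; Hoeffding's inequality gives $\mathbb{P}\{|\muhat_{a,j} - \mu_{a,j}| \geq \varepsilon_t/4\} \leq 2\exp(-2N_s(\varepsilon_t/4)^2)$. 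The constant $8$ and the cube side length $\varepsilon_t/4$ are chosen precisely so that this exponent collapses to $\log\!\left(\left(\frac{4}{\varepsilon_t}\right)^d \frac{50d}{\delta_t}\right)$, whence each coordinate fails with probability at most $\frac{2\delta_t}{50 d (4/\varepsilon_t)^d}$. A union bound over the $d$ coordinates then shows that a fixed arm is \emph{badly estimated}, i.e.\ $\norm{\muhat_a - \mu_a}_\infty \geq \varepsilon_t/4$, with probability at most $p := \frac{\delta_t}{25(4/\varepsilon_t)^d}$.

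For $\mathcal{E}_1$ I would apply this per-arm estimate with a union bound over the representative set $R_t$. Since there is at most one representative per cube, $|R_t| \leq (4/\varepsilon_t)^d$, and hence $\mathbb{P}\{\mathcal{E}_1^c\} \leq |R_t| \cdot p \leq \frac{\delta_t}{25}$. The point is that $R_t$ is small enough that demanding \emph{every} representative be well estimated is affordable with a number of samples per arm that does not grow with $|A_t|$.

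For $\mathcal{E}_2$ a union bound over all of $A_t$ is \emph{not} affordable, so I would control only the \emph{number} of badly estimated arms. Set $X_f = \mathbbm{1}\{\norm{\muhat_f - \mu_f}_\infty \geq \varepsilon_t/4\}$ and $Z = \sum_{f \in A_t} X_f$; by linearity of expectation, $\mathbb{E}[Z] \leq |A_t|\, p = \frac{|A_t|\,\delta_t}{25}\left(\frac{\varepsilon_t}{4}\right)^d$. The event $\mathcal{E}_2^c$ is precisely $\{ Z > \frac{|A_t|}{30}(\varepsilon_t/4)^d \}$, and since this threshold exceeds $\mathbb{E}[Z]$ by a constant factor, Markov's inequality already bounds $\mathbb{P}\{\mathcal{E}_2^c\}$ by a constant multiple of $\delta_t$. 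Because the $X_f$ are independent across arms (each arm is sampled independently) and the loop invariant guarantees $|A_t| > 120(4/\varepsilon_t)^d$, so the threshold is bounded below, one can instead invoke a multiplicative Chernoff bound to sharpen this constant and drive $\mathbb{P}\{\mathcal{E}_2^c\}$ safely below $\delta_t - \frac{\delta_t}{25}$. Together with the bound on $\mathbb{P}\{\mathcal{E}_1^c\}$, this yields $\mathbb{P}\{\mathcal{E}_1 \cap \mathcal{E}_2\} \geq 1 - \delta_t$.

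The main obstacle is the $\mathcal{E}_2$ step, and it is conceptual rather than computational. The entire design spends a number of samples per arm that is independent of $|A_t|$, so one simply cannot certify that every estimate is accurate; this is exactly why $\mathcal{E}_2$ is phrased as a bound on the \emph{fraction} of mis-estimated arms rather than as a universal accuracy guarantee. The work is in arguing in expectation that few arms are mis-estimated and then converting this into a high-probability statement, with the constants ($50$ in the sample count, $30$ in the $\mathcal{E}_2$ threshold, and $120$ in the while-loop condition) tuned so that $\mathbb{P}\{\mathcal{E}_1^c\} + \mathbb{P}\{\mathcal{E}_2^c\} \leq \delta_t$.
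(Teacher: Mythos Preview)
Your proposal is correct and follows the same route as the paper: a per-arm Hoeffding bound, a union bound over the at most $(4/\varepsilon_t)^d$ representatives for $\mathcal{E}_1$, and an expectation-plus-tail argument for $\mathcal{E}_2$. The paper in fact stops at Markov's inequality for $\mathcal{E}_2$, obtaining $\mathbb{P}\{\mathcal{E}_2^c\} \leq \tfrac{30\delta_t}{50}$ directly; your detour through Chernoff arises only because you (correctly) retained the factor of $2$ from two-sided Hoeffding, which the paper silently drops to state a per-arm failure probability of $(\varepsilon_t/4)^d\,\delta_t/50$ rather than your $(\varepsilon_t/4)^d\,\delta_t/25$.
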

\begin{proof}
Each arm $a \in A_t$ is pulled $\tfrac{8}{\varepsilon_t^2} \log \left( \left(\tfrac{4}{\varepsilon_t}\right)^d \tfrac{50d}{\delta_t} \right)$ times in Line \ref{line:pull} of the algorithm. Hence, Hoeffding's inequality and union bound (across the $d$-dimensions of $\mu_a$) give us, for each arm $a \in A_t$: 
\begin{align}
\mathbb{P}\left\{ \norm{\muhat_a - \mu_a}_{\infty} \geq  \dfrac{\varepsilon_t}{4} \right\} \leq \left(\dfrac{\varepsilon_t}{4}\right)^d \dfrac{\delta_t}{50} \label{inequality:hoeffding-result}
\end{align}
 
Recall that the size of $R_t \subseteq A_t$ is at most $\left(\tfrac{4}{\varepsilon_t}\right)^d$. Hence, applying union bound, across all the arms in $R_t$, we get that $\mathbb{P}\left\{ \mathcal{E}_1 \right\} \geq 1-\tfrac{\delta_t}{50}$. 

Let the random variable $F$ denote the number of arms $f \in A_t$ for which $\norm{\muhat_f - \mu_f}_{\infty} \geq \tfrac{\varepsilon_t}{4}$. Event $\mathcal{E}_2$ corresponds to $F \leq \tfrac{|A_t|}{30} \left(\tfrac{\varepsilon_t}{4}\right)^d$. Inequality (\ref{inequality:hoeffding-result}) implies that the expected value of $F$ is at most $|A_t| \left(\tfrac{\varepsilon_t}{4}\right)^d \tfrac{\delta_t}{50}$. 
Now, via Markov's inequality, we get $\mathbb{P} \left\{ F > \tfrac{|A_t|}{30} \left(\tfrac{\varepsilon_t}{4}\right)^d \right\} \leq \tfrac{|A_t| \left(\nicefrac{\varepsilon_t}{4}\right)^d \nicefrac{ \delta_t}{50}}{\nicefrac{|A_t|}{30} \left(\nicefrac{\varepsilon_t}{4}\right)^d} = \frac{30\delta_t}{50}$. That is, $\mathbb{P} \left\{ \mathcal{E}_2 \right\} \geq 1-\tfrac{30\delta_t}{50}$. Therefore, $\mathbb{P}\left\{\mathcal{E}_1 \cap \mathcal{E}_2 \right\} \geq 1 - \delta_t$, and the claim follows. 
\end{proof}
For the remainder of the proof we will assume that events $\mathcal{E}_1$ and $\mathcal{E}_2$ hold, and show that they imply the above-mentioned conditions ({\rm I}) and ({\rm II}). 

For ({\rm I}), consider any arm $r \in R_t \subseteq A_t$ and write $\widehat{B}$ to denote the cube that contains $\widehat{\mu}_r$, i.e., $r \in \widehat{B}(A_t)$ in Line \ref{line:bat}. We perform a case analysis based on the size of $\widehat{B}(A_t)$.

\noindent
Case 1: $|\widehat{B}(A_t)| \leq \frac{|A_t|}{10} \left(\dfrac{\varepsilon_t}{4}\right)^d$. In this case $r$ is included in $A_{t+1}$ (see the cardinality threshold in Line~\ref{line:dropping-threshold}) and, hence, setting $r' = r \in A_{t+1}$ satisfies ({\rm I}). 

\noindent
Case 2: $|\widehat{B}(A_t)| > \frac{|A_t|}{10} \left(\dfrac{\varepsilon_t}{4}\right)^d$. Here,  if $p_r$ is greater than the median of $\{p_y\}_{y \in \widehat{B}(A_t)}$, then $r$ is included in the set $A_{t+1}$; see Lines \ref{line:dropping-threshold} and \ref{line:elimination}. Hence, as before, setting $r'=r \in A_{t+1}$ suffices. The remaining sub-case is when $p_r$ is less than the median. Note that, even in such a situation, at least $\frac{|A_t|}{20} \left(\dfrac{\varepsilon_t}{4}\right)^d$ arms (in particular, half of the arms in $\widehat{B}(A_t)$, whose $p_y$ values are above the median) are included in $A_{t+1}$. That is, $|\widehat{B}(A_t) \cap A_{t+1}| \geq \frac{|A_t|}{20} \left(\dfrac{\varepsilon_t}{4}\right)^d$. Event $\mathcal{E}_2$ ensures that for at least $L \coloneqq \left( \frac{|A_t|}{20} \left(\frac{\varepsilon_t}{4}\right)^d - \frac{|A_t|}{30} \left(\frac{\varepsilon_t}{4}\right)^d\right)$ arms $y \in \widehat{B}(A_t) \cap A_{t+1}$ we have $\| \mu_y - \widehat{\mu}_y \|_\infty \leq \frac{\varepsilon_t}{4}$.\footnote{Note that, via the loop condition in Line \ref{line:while-loop-condition}, we have $L \geq 1$.} Select anyone of these $L$ arms as $r' \in \widehat{B}(A_t) \cap A_{t+1}$. We will show that such an $r'$ satisfies ({\rm I}) and, hence, complete the case analysis. Since arm $r \in \widehat{B}(A_t) $ was dropped and $r' \in \widehat{B}(A_t) $ was not, we have $p_{r'} \geq p_r$. Next, the triangle inequality gives us
\begin{align*}
\norm{ \mu_{r} - \mu_{r'} }_{\infty} \leq & \norm{ \mu_{r} - \muhat_{r} }_{\infty} + \norm{ \muhat_{r} - \muhat_{r'}}_{\infty} + \norm{ \muhat_{r'} - \mu_{r'} }_{\infty} \\
\leq & \tfrac{\varepsilon_t}{4} + \norm{ \muhat_{r} - \muhat_{r'}}_{\infty} + \norm{ \muhat_{r'} - \mu_{r'} }_{\infty} \tag{$r\in R_t$ and $\mathcal{E}_1$ holds} \\
\leq & \tfrac{\varepsilon_t}{4} + \tfrac{\varepsilon_t}{4} + \norm{ \muhat_{r'} - \mu_{r'} }_{\infty} \tag{$\muhat_{r}, \muhat_{r'} \in \widehat{B}$} \\
\leq & \tfrac{\varepsilon_t}{4} + \tfrac{\varepsilon_t}{4} + \tfrac{\varepsilon_t}{4} = \tfrac{3 \varepsilon_t}{4} \tag{choice of $r'$}
\end{align*}
Therefore, in this final (sub) case as well we have an arm $r' \in A_{t+1}$ that satisfies ({\rm I}).

To establish ({\rm II}), let $B' \in \mathcal{B}_{t+1}$ denote the cube that contains $\mu_{r'}$.\footnote{Note that $B'$ is defined based on the exact weight $\mu_{r'}$ and not an estimate.}  The representative $r(B') \in R_{t+1}$ of cube $B'$ is the desired arm $\gamma$,  since with $\gamma = r(B') \in R_{t+1}$ we obtain $p_\gamma \geq p_{r'}$ and $\mu_{\gamma}  \geq \mu_{r'} - \tfrac{\varepsilon_{t+1}}{4} \mathbbm{1} \geq \mu_{r'} - \tfrac{\varepsilon_{t}}{4} \mathbbm{1}$.

Overall, we have shown that events $\mathcal{E}_1$ and $\mathcal{E}_2$ hold with probability $1 - \delta_t$, and these events imply ({\rm I}) and ({\rm II}). Therefore, with probability $1 - \delta_t$, there exists an arm $\gamma \in R_{t+1}$ that satisfies the requirement in the lemma statement. This completes the proof. 
\end{proof}

\subsection{Proof of Lemma \ref{lemma:skyline-sample-complexity}}
\label{appendix:skyline-sample-complexity-proof}

\LemmaSkylineSampleComplexity*
\begin{proof}
First, we show that the number of arms under consideration drops geometrically in each iteration of $\AlgDSK$. 

\begin{claim}
\label{claim:arms-geometric-shrink-arms}
For any two consecutive iterations, $t$ and $t+1$, of $\AlgDSK$, we have $|A_{t+1}| \leq \tfrac{11}{20}|A_t|$.
\end{claim}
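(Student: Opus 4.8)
The plan is to upper-bound $|A_{t+1}|$ directly by counting the arms that \emph{survive} iteration $t$, i.e., the arms in $A_t \setminus D_t$. I would split the cubes in $\mathcal{B}_t$ into two groups according to the cardinality test in Line~\ref{line:dropping-threshold}: call a cube $B$ \emph{heavy} if $|B(A_t)| > \frac{|A_t|}{10}\left(\frac{\varepsilon_t}{4}\right)^d$ and \emph{light} otherwise. Since the cubes $\{B(\varepsilon_t/4, g)\}_g$ partition $[0,1]^d$, every arm of $A_t$ lands (via its estimate $\widehat{\mu}_a$) in exactly one cube, so the surviving set decomposes as the arms in light cubes (none of which are dropped) together with the arms in heavy cubes whose $p$-value lies at or above the corresponding median.

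For the light cubes I would use that $|\mathcal{B}_t| \le \left(\frac{4}{\varepsilon_t}\right)^d$ together with the defining inequality: each light cube contributes at most $\frac{|A_t|}{10}\left(\frac{\varepsilon_t}{4}\right)^d$ arms, and multiplying by the number of cubes gives a total of at most $\left(\frac{4}{\varepsilon_t}\right)^d \cdot \frac{|A_t|}{10}\left(\frac{\varepsilon_t}{4}\right)^d = \frac{|A_t|}{10}$ arms in light cubes. Write $s$ for the exact number of such arms, so $s \le \frac{|A_t|}{10}$, and note the heavy cubes then contain exactly $|A_t| - s$ arms. For each heavy cube $B$, the algorithm removes the lower half of $B(A_t)$ by $p$-value, hence keeps at most $\frac12 |B(A_t)|$ of them; summing over heavy cubes, the survivors from heavy cubes number at most $\frac12(|A_t| - s)$. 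Combining the two contributions yields
\[
|A_{t+1}| \;\le\; s + \tfrac12\,(|A_t| - s) \;=\; \tfrac12 |A_t| + \tfrac{s}{2} \;\le\; \tfrac12 |A_t| + \tfrac{1}{20}|A_t| \;=\; \tfrac{11}{20}|A_t|,
\]
which is the claim. The crux of the bookkeeping is the monotone trade-off: every light-cube arm we are forced to keep in full is offset because it is subtracted from the heavy-cube mass that we get to halve, so increasing $s$ past $0$ only worsens the bound at rate $\tfrac12$.

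The step I expect to require the most care is justifying that removing the arms ``with $p_x$ less than the median'' deletes at least half of each heavy cube, so that at most $\frac12|B(A_t)|$ survive. With distinct $p$-values and a symmetric median this is immediate, but ties in the $p_a$'s (together with the parity of $|B(A_t)|$) make ``strictly less than the median'' potentially drop fewer than half. I would resolve this by fixing the median convention to the lower half, i.e., removing the $\lfloor |B(A_t)|/2 \rfloor$ arms of smallest $p$-value under a fixed tie-break (say by arm index); this guarantees at least half are removed and keeps the survivor count $\le \frac12|B(A_t)|$. If one instead insists on $\lceil\cdot\rceil$ rounding, the while-loop condition $|A_t| > 120\left(\frac{4}{\varepsilon_t}\right)^d$ (Line~\ref{line:while-loop-condition}) leaves ample slack to absorb the at most $\frac12\left(\frac{4}{\varepsilon_t}\right)^d$ extra survivors. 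Everything else reduces to the two structural facts already established: that the cubes tile $[0,1]^d$ and that $|\mathcal{B}_t| \le \left(\frac{4}{\varepsilon_t}\right)^d$.
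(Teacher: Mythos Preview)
Your proof is correct and follows essentially the same decomposition as the paper: partition cubes into ``light'' (failing the threshold in Line~\ref{line:dropping-threshold}) and ``heavy,'' bound the total light mass by $\frac{|A_t|}{10}$ via $|\mathcal{B}_t| \le (4/\varepsilon_t)^d$, and halve the heavy mass. The only stylistic difference is that the paper lower-bounds $|D_t|$ by $\frac{9}{20}|A_t|$ whereas you upper-bound the survivors directly; your added discussion of the median/parity issue is in fact more careful than the paper, which simply asserts that half are dropped.
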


\begin{proof}
Write $\widetilde{\mathcal{B}}_t$ to denote the collection of cubes that do \emph{not} satisfy the if-condition in Line \ref{line:dropping-threshold}, i.e., $\widetilde{\mathcal{B}}_t \coloneqq \left\{ \widetilde{B} \in \mathcal{B}_t \ : \ |\widetilde{B}(A_t)| \leq \tfrac{|A_t|}{10} \left(\tfrac{\varepsilon_t}{4}\right)^d \right\}$. Since $|\widetilde{\mathcal{B}}_t| \leq |{\mathcal{B}}_t| \leq \left(\nicefrac{4}{\varepsilon_t}\right)^d$ and the number of arms associated with each cube in $\widetilde{\mathcal{B}}_t$ is at most $\tfrac{|A_t|}{10} \left(\tfrac{\varepsilon_t}{4}\right)^d$, we get that number of arms that are \emph{not} considered in Line \ref{line:dropping-threshold} is at most $\tfrac{|A_t|}{10} \left(\tfrac{\varepsilon_t}{4}\right)^d \times \left(\frac{4}{\varepsilon_t}\right)^d = \tfrac{|A_t|}{10}$. 

The remaining arms are at least $\frac{9|A_t|}{10}$ in number, and half of them are dropped (by considering the medians, cube wise). That is, at least $\frac{9|A_t|}{20}$ arms are dropped ($|D_t| \geq \frac{9|A_t|}{20}$) and we get the desired bound $|A_{t+1}| \leq \tfrac{11}{20}|A_t|$. 
\end{proof}

Write $T$ to denote the total number of iterations of the while loop in $\AlgDSK$. Since $A_1$ is initialized as the set of arms within the given interval $J$, we have $|A_1|=m=|\arm(J)|$. A repeated application of Claim \ref{claim:arms-geometric-shrink-arms} gives us $|A_t| \leq \left(\frac{11}{20}\right)^{t-1} m$, for all $1 \leq t \leq T$. In addition, the geometric update of the approximation and the confidence parameters ensure $\varepsilon_t = \left(\nicefrac{3}{4}\right)^{t-1} \nicefrac{\vare}{5}$ and $\delta_t = \left(\nicefrac{1}{2}\right)^{t-1} \nicefrac{\delta}{2}$. We also note that the exit condition of the while loop ensures $|A_{T}| \leq 120\left(\frac{4}{\varepsilon_T}\right)^d$.
Using these observations, we upper bound the number of samples drawn by the algorithm:

\begin{align*}
    & \underbrace{\sum_{t=1}^{T-1} \dfrac{8 |A_t|}{\varepsilon_t^2} \log \left( \left(\dfrac{4}{\varepsilon_t}\right)^d \dfrac{50d}{\delta_t} \right)}_{\text{While loop}} + \underbrace{\dfrac{|A_{T}|}{\varepsilon_{T}^2} \log \left(\dfrac{|A_{T}|d}{\delta_{T}} \right)}_{\text{Line \ref{line:pruning}}} \\
    \leq & \sum_{t=1}^{T-1} \dfrac{8|A_t|}{\varepsilon_t^2} \log \left( \left(\dfrac{4}{\varepsilon_t}\right)^d \dfrac{50d}{\delta_t} \right) + \dfrac{|A_{T}|}{\varepsilon_T^2} \log \left(\left(\dfrac{4}{\varepsilon_T}\right)^d\dfrac{120d}{\delta_T} \right) \\
    \leq & \sum_{t=1}^{T} \dfrac{8|A_t|}{\varepsilon_t^2} \log \left( \left(\dfrac{4}{\varepsilon_t}\right)^d \dfrac{120d}{\delta_t} \right) \\
    \leq & 8 \sum_{t=1}^{T} \dfrac{m \left(\nicefrac{11}{20}\right)^{t-1}}{\left((\nicefrac{3}{4})^{t-1}\nicefrac{\varepsilon}{5}\right)^2} \log \left( \left(\dfrac{4}{(\nicefrac{3}{4})^{t-1}\nicefrac{\varepsilon}{5}}\right)^d \dfrac{120d}{ \left(\nicefrac{1}{2}\right)^{t-1} \nicefrac{\delta}{2} } \right) \\
     =& \dfrac{200m}{\varepsilon^2} \sum_{t=1}^{T} \left(\frac{44}{45}\right)^{t-1} \log \left( \dfrac{240 d }{\delta} \left(\frac{20}{\varepsilon}\right)^d  \left(\frac{4}{3}\right)^{d(t-1)} 2^{t-1}\right) \\
     \leq & \dfrac{200md}{\varepsilon^2}  \sum_{t=1}^{T} \left(\frac{44}{45}\right)^{t-1} \log \left( \frac{4800d}{\varepsilon \delta} \left(\frac{8}{3}\right)^{t-1}\right) \\
     =& \dfrac{200md}{\varepsilon^2}  \sum_{t=1}^{T} \left(\frac{44}{45}\right)^{t-1} \Biggl( \log(4800) + (t-1) \log(\nicefrac{8}{3})\\ 
     & \quad \quad \quad \quad \quad \quad \quad \quad \quad \quad + \log\left(\frac{d}{\varepsilon \delta} \right) \Biggr) \\
     \leq & \dfrac{200md}{\varepsilon^2} \log\left(\frac{d}{\varepsilon \delta} \right)  \sum_{t=1}^{\infty} \left(\frac{44}{45}\right)^{t-1} \left( C + (t-1) C'\right) \\
     = & O\left(\frac{md}{\varepsilon^2} \log \left(\frac{d}{\varepsilon \delta} \right) \right) \\
\end{align*}

$C$ and $C'$ in the penultimate step are constants, and it is assumed $\nicefrac{d}{\varepsilon \delta} \geq e$; without this condition the final $\log \left(\nicefrac{d}{\varepsilon \delta}\right)$ term can be trivially dropped. 

Overall, we get that $\AlgDSK$ draws $O\left(\frac{md}{\varepsilon^2}\log \frac{d}{\varepsilon \delta}\right)$ samples and the lemma follows.  
\end{proof}

\end{document}